\documentclass[10pt]{article} 
\usepackage[preprint]{tmlr}
\usepackage{tabularx}




\usepackage{hyperref}
\usepackage{url}

\usepackage[T1]{fontenc}    
\usepackage{wrapfig} 
\usepackage{thm-restate}
\usepackage{microtype}
\usepackage{subfigure}
\usepackage{booktabs} 
\usepackage{nicematrix} 
\usepackage{algpseudocode} 
\usepackage{algorithm} 
\usepackage{xspace}
\usepackage{enumitem}

\usepackage{amssymb}
\usepackage{pifont}
\usepackage{booktabs}
\usepackage{caption}
\usepackage{hyperref}


\usepackage[utf8]{inputenc} 
\usepackage[T1]{fontenc}    
\usepackage{url}            
\usepackage{booktabs}       
\usepackage{amsfonts}       
\usepackage{nicefrac}       
\usepackage{microtype}      
\usepackage{xcolor}


\usepackage{booktabs} 
\usepackage{colortbl} 
\usepackage{xcolor} 

\definecolor{gray}{gray}{0.9}
\definecolor{lightgray}{gray}{0.95}
\usepackage{amsmath}
\usepackage{amssymb}
\usepackage{mathtools}
\usepackage{amsthm}
\usepackage{float}
\usepackage{booktabs} 
\usepackage{colortbl} 
\usepackage{tabularray} 
\usepackage{xcolor} 
\usepackage[normalem]{ulem} 
\definecolor{gray}{gray}{0.9}
\usepackage[capitalize,noabbrev]{cleveref}
\theoremstyle{plain}
\newtheorem{theorem}{Theorem}[section]
\newtheorem{proposition}[theorem]{Proposition}

\newtheorem{corollary}[theorem]{Corollary}
\theoremstyle{definition}
\newtheorem{definition}[theorem]{Definition}
\newtheorem{lemma}[theorem]{Lemma}
\usepackage{mathtools}

\theoremstyle{remark}

\usepackage[textsize=tiny]{todonotes}
\usepackage{float}

\newcommand{\lms}{ \{\!\!\{ }
\newcommand{\rms}{ \}\!\!\} }

\newcommand{\RR}{\mathbb{R}}

\newcommand{\X}{\textbf{X}}

\newcommand{\SO}{\mathcal{SO}}
\renewcommand{\O}{\mathcal{O}}

\usepackage{fontawesome5}
\usepackage{titletoc}
\usepackage[toc,page,header]{appendix}
\usepackage{caption}

\usepackage{tikz}
\usetikzlibrary{patterns}

\usepackage{graphicx}



\usepackage{amsmath,amsfonts,bm}
\usepackage{mathtools}
\usepackage{xcolor}
\usepackage{hyperref}
\usepackage{booktabs} 
\usepackage{colortbl} 
\usepackage{graphicx} 
\usepackage{xcolor}   
\usepackage{array}  
\usepackage{tabularray}
\usepackage{booktabs} 
\usepackage{colortbl} 
\usepackage{xcolor}   
\usepackage{graphicx} 
\usepackage{array}    

\definecolor{snsblue}{RGB}{31,119,180}
\definecolor{snsorange}{RGB}{255,127,14}
\definecolor{snsgreen}{RGB}{44,160,44}
\definecolor{snsred}{RGB}{214,39,40}
\usepackage{booktabs}
\usepackage{colortbl}
\usepackage{xcolor}
\usepackage{graphicx}
\usepackage{array}

\definecolor{snspurple}{RGB}{148,103,189}








\def\eqref#1{equation~\ref{#1}}









\def\1{\bm{1}}








\usepackage{amsmath, amssymb, amsthm}
\usepackage{graphicx}
\usepackage{hyperref}
\usepackage{geometry}


\usepackage{amsmath, amssymb}

\newcommand{\V}{\mathcal{V}}

\newcommand{\D}{\mathcal{D}}







\def\mG{{\bm{G}}}

\def\mP{{\bm{P}}}
\def\mQ{{\bm{Q}}}
\def\mR{{\bm{R}}}

\def\mX{{\bm{X}}}
\def\mY{{\bm{Y}}}
\def\mZ{{\bm{Z}}}

\DeclareMathAlphabet{\mathsfit}{\encodingdefault}{\sfdefault}{m}{sl}
\SetMathAlphabet{\mathsfit}{bold}{\encodingdefault}{\sfdefault}{bx}{n}











\newcommand{\R}{\mathbb{R}}




\usepackage{soul}
\usepackage[utf8]{inputenc} 
\usepackage[T1]{fontenc}    
\usepackage{multirow}
\usepackage{url}            
\usepackage{booktabs}       
\usepackage{amsfonts}       
\usepackage{nicefrac}       
\usepackage{microtype}      
\usepackage{xcolor}         

\usepackage[english]{babel}

\usepackage{soul, color}
\newcommand{\Note}[1]{}
\renewcommand{\Note}[1]{#1}  








\usepackage[toc,page,header]{appendix}

\usepackage[T1]{fontenc}
\usepackage[utf8]{inputenc}
\usepackage{authblk}

\usepackage{tikz}
\usetikzlibrary{patterns}
\theoremstyle{remark}
\usepackage{xcolor,colortbl}
\usepackage{booktabs}
\usepackage{siunitx}
\usepackage{tabularray}
\UseTblrLibrary{booktabs,siunitx}
\usepackage{amsmath}
\usepackage{csquotes}
\usepackage{amssymb}
\usepackage{hyperref}

\usepackage{array}
\usepackage{booktabs}
\usepackage{tabularray}
\usepackage{hyperref}
\declaretheorem[name=Lemma,numberwithin=section]{lemma}
\usepackage{hyperref}
\usepackage{url}
\newcommand{\G}{\mathcal{G_{\pm}}}
\newcommand{\Gplus}{\mathcal{G}_{+}}
\newcommand{\dg}{d_{\G}}
\newcommand{\dgplus}{d_{\Gplus}}
\newcommand{\dd}{d_{\D}}
\newcommand{\update}{\psi}
\newcommand{\readout}{\phi}
\newcommand{\cupdate}{c_\psi}

\newcommand{\creadout}{c_{\readout}}

\newcommand{\mi}{\mathbf{i}}
\newcommand{\group}{\mathbb{G}}

\title{Toward bilipshiz geometric models}

\author{
    \name Yonatan Sverdlov \email yonatans@campus.technion.ac.il \\
    \addr Faculty of Mathematics, Technion – Israel Institute of Technology
    \AND
    \name Eitan Rosen \email eitan.rosen@technion.ac.il \\
    \addr Faculty of Mathematics, Technion – Israel Institute of Technology
    \AND
    \name Nadav Dym \email nadavdym@technion.ac.il \\
    \addr Faculty of Mathematics, Technion – Israel Institute of Technology
}



\begin{document}

\maketitle

\begin{abstract}
Many neural networks for point clouds are, by design,  invariant to the symmetries of this datatype:  permutations and rigid motions. The purpose of this paper is to examine whether such networks preserve natural symmetry aware  distances on the point cloud spaces, through the notion of bi-Lipschitz equivalence. This inquiry is motivated by recent work in the Equivariant learning literature which highlights the advantages of bi-Lipschitz models in other scenarios.

We consider two symmetry aware metrics on point clouds: (a) The Procrustes Matching (PM) metric and (b) Hard Gromov Wasserstien distances. We show that these two distances themselves are not bi-Lipschitz equivalent, and as a corollary deduce that popular invariant networks for point clouds are not bi-Lipschitz with respect to the PM metric.  We then show how these networks can be modified so that they do obtain bi-Lipschitz guarantees. Finally, we provide initial experiments showing the advantage of the proposed bi-Lipschitz model over standard invariant models, for the tasks of finding correspondences between 3D point clouds. 
\end{abstract}
\section{Introduction}
We consider neural networks defined  on points sets- a collections of $n$ points in $\RR^d$ represented by a matrix    $\mX\in \RR^{n\times d}$, which are invariant or equivariant to permutation, rotation and translation of the points. These models are suitable for a large body of invariant and equivariant tasks, such as 3D point cloud processing in computer vision, learning of particle dynamics, and chemoinformatics, and as a result there is a large body of work targeting point sets with this type of symmetries, notable examples including EGNN \cite{egnn}, Vector Neurons \cite{deng2021vector}, MACE \cite{mace}, DimeNet \cite{DimeNet,DimeNet++} and SphereNet \cite{SphereNet}.  

To navigate the large space of possible invariant neural networks and understand their theoretical expressive power, the notion of \emph{completeness} was introduced. An invariant model on point sets will be complete, if it can assign distinct outputs to any given pair of points sets which are not related by a group transformation. The importance of this inquiry can be understood via the observation that if $f_\theta$ is a model which is not complete, and it assigns the same value to two different point clouds, then it will struggle to approximate functions which assign different values to these point clouds. Conversely, models which are complete  can be used to approximate all invariant functions \cite{chen2019equivalence,gortler_and_dym}.

In the last few years the machine learning community has developed a reasonably good picture of completeness of different invariant models. Models like Tensor Field Networks \cite{TFN} and Gemnet \cite{gemnet} were shown to be complete \cite{dym2020universality}, while others were shown to be incomplete \cite{pozdnyakov2022incompleteness,is_distance}. Recently, there have been several works showing completeness of invariant models which apply $k$-GNN to rotation invariant weighted graphs induced by $d$ dimensional point clouds, providing that $k+1\geq d$ \cite{gramnet,three_iterations,is_distance}. Similar ideas were used in \cite{Li_completeness} to prove completeness via subgraph GNNs, and compeleteness of Spherenet \cite{SphereNet} and Dimenet \cite{DimeNet}.  
However, completeness guarantees do not address the quality of separation: namely, completeness implies that if $X,X'$ are point sets which are not equivalent, then $f_\theta(X)\neq f_\theta(X')$. However, it does not guarantee that  if $X,X'$ are 'close' (or 'far') then $f_\theta(X) $ and $f_\theta(X') $ will be 'close' ( or 'far'). To ensure preservation of distances under the map $f_\theta$, we would like to guarantee that it is bi-Lipschitz in an appropriate sense.   Bi-Lipschitz models can provide provable guarantees for  metric based learning \cite{cahill2024towards}, and recent work has proven the value of bi-Lipschitz models for permutation invariant learning for multisets and graphs \cite{sort-MPNN, FSW-GNN,amirfourier}. Motivated by these works, our goal in this paper is to understand how to construct point set networks which are not only complete, but are also bi-Lipschitz. 

\subsection{Main Results}
The notion of bi-Lipchitzness requires a 'natural'  metric defined on the space of points sets, up to permutation and rigid motion symmetries. In this work, we consider two such metrics, often discussed in the literature: (a) The Procrustes Matching (PM) metric, obtained by simple quotienting a norm on point set space by the group of permutations and rigid motions, and (b) the Hard-Gromov-Wasserstein distance, where Euclidean distances, which are inherently invariant to rigid motions, are compared up to permutations. Our first result is to show that these metrics are not bi-Lipschitz equivalent, and find the exact H\"{o}lder exponents ($2$ and $1$) which govern the relationship between these metrics. Accordingly, an invariant model for point sets cannot be bi-Lipschitz with respect to both of these metric simultaneously. We choose in this paper to focus on bi-Lipschitzness with resect to the Procrustes Matching metric, as we believe this metric is more natural for point sets, while Gromov-Wasserstein distances are more suitable for comparison of general metric spaces. 

Next, we consider a popular class of invariant models for point sets, which employs message passing neural networks to the distance matrix induced by the point sets. Such models are known to be complete when $d=2$ (but not for larger $d$).  We show that in the planar case, these complete models are not bi-Lipschitz with respect to the PM metric. We then show how to modify these models, in the spirit of the work of \cite{gramnet}, to obtain a bi-Lipschitz model in the planar case. We will then show how to generalize the bi-Lipschitz construction to the case $d\geq 3$ (under some additional assumptions). Finally, we present some preliminary experiments on the task of learning the rotation-permutation correspondence between pairs of point clouds. We show that in this task the proposed bi-Lipschitz model outperforms a standard invariant model which are not bi-Lipschitz.

\subsection{Related work}
Motivated by invariant learning tasks, there have been several recent works discussing bi-Lipschitzness for invariant models \cite{cahill2024towards,dym2025bi,blumsmith2025estimatingeuclideandistortionorbit}. In particular, several papers discussed \emph{permutation invariant} models for both sets \cite{balan2022permutation,amirfourier,cahill2024towards} and graphs \cite{sort-MPNN,FSW-GNN}. Bi-Lipschitz invariants with respect to the action of \emph{rotations} were discussed in \cite{derksen2024bi,amir2025stability}. In this paper, our goal is to find bi-Lipschitz invariant models with respect to the joined action of permutations, rotations and translations. Though upper-Lipschitz models were discussed in this setting \cite{Widdowson_2023_CVPR}, to the best of our knowledge this paper is the first to address the bi-Lipschitz problem for the joint symmetry group of permutations, rotations and translations.  

\subsection{Definitions}
We now introduce some definitions necessary for formally stating and proving our results. 

For given natural $n,d$, A point set is a matrix $\X \in\V = \R^{d\times n}$,  whose $n$ columns, denoted by $x_1,\ldots,x_n$ represent a collection of $n$ points in $\RR^d$. We are interested in invariance with respect to 
\begin{enumerate}
\item \textbf{Permutation:} The application of a permutation $\tau\in S_n$ to $\X$, denoted by 
$$\tau (x_1,\ldots,x_n)=(x_{\tau(1)},\ldots,x_{\tau(n)}).$$
\item \textbf{Rotation:} The application of the same (improper) rotation $R\in O(d) $ to each point, namely 
$$R(x_1,\ldots,x_n)=(Rx_1,\ldots,Rx_n) $$
\item \textbf{Translation:}  The application of the same translation $t\in \RR^d $ to each point, namely 
$$t(x_1,\ldots,x_n)=(x_1+t,\ldots,x_n+t).  $$
\end{enumerate}   
We denote the group generated by permutations, rotations and translations by $\G = \O(d) \rtimes \R^d\times S_n$, where the semi-product $\O(d) \rtimes \R^d$ is known as the `rigid motions group'. We will also discuss a close variant of this setting, where we only allow proper rotations, namely matrices $R\in SO(d)$ which have determinant 1. We denote the slightly smaller group obtained by adding this restriction to proper rotations by $\Gplus = \SO(d) \rtimes \R^d\times S_n$. 

In the following definitions,  $\group$ will be a group acting on $\V=\RR^{d\times n}$.
\begin{definition}[Isomorphic elements]
For $\mX,\mY\in \V$, we say that $\mX,\mY $ are $\group$-isomorphic, and denote $\mX \cong \mY $, if $\mX=g\mY$ for some $g\in \group$.  
\end{definition}
\begin{definition}[Invariant metric]
We say  $d:\V \times \V \rightarrow \R^{\geq 0}$ is a $\group$ invariant  metric if for all $\mX,\mY,\mZ\in \V$,
\begin{align*}
    &d(\mX, \mY) = d(\mY, \mX) \\ 
    &d(\mX,\mY)\leq  d(\mX,\mZ) + d(\mZ, \mY) \\
    &d(\mX,\mY) = 0 \iff \mX \cong\mY
\end{align*}
Equivalently, a $\group$ invariant metric is a metric on the quotient space $\V/\group$.
\end{definition}
In this paper, we will consider two different $\G$ invariant metrics, and show that they are bi-H\"{o}lder equivalent but not bi-Lipschitz equivalent. We now define these notions formally. 
\begin{definition}[H\"{o}lder metrics]\label{def:holderD}
Let $d_1,d_2$ be two non-negative functions on $\V $.    We say that $d_1$ is ($\alpha_1,\alpha_2$) bi-H\"{o}lder with respect to $d_2$ if there exists constants $C_1,C_2>0$ such that 
    \begin{align*}
        C_1\cdot d_{1}(\mX,\mY)^{\alpha_1}\leq d_{2}(\mX,\mY)\leq C_2 \cdot d_{1}(\mX,\mY)^{\alpha_2}
    \end{align*}
\end{definition}
We say $d_1,d_2$ are bi-Lipschitz equivalent if $d_1$ is bi-H\"{o}lder with respect to $d_2$ with $(\alpha_1,\alpha_2) = (1,1)$. 

Once we discuss invariant metrics, the next step will be to discuss the stability of invariant models with respect to these invariant metrics. An invariant function, and a complete invariant function, are defined as

\begin{definition}[Invariant and complete functions]\label{def:inv_complete}
    We say a function $f:\V\rightarrow \R^{m}$ is  $\group$ invariant,  if 
    \begin{align*}
       \forall \mX \cong \mY, f(\mX)=f(\mY).
    \end{align*}
    We say that an invariant function is complete if the converse is also true: namely, 
      \begin{align*}
       \forall \mX,\mY\in \V, f(\mX)=f(\mY) \iff \mX \cong \mY.
    \end{align*}
    \end{definition}
The stability of a invariant function $f$ is defined via the notions of upper and lower Lipschitz stability    
\begin{definition}[Upper and lower lipshiz]
Given some $\group$ invariant metric $d$, we say a $\group$ invariant function $f$ is upper-Lipschitz with respect to $d$, if there exists $C>0$ such that
\begin{align*}
    \|f(\mX)-f(\mY)\|_2\leq C\cdot d(\mX, \mY).
\end{align*}
We say $f$ is  lower lipshiz if there exists some $c>0$ such that 
\begin{align*}
       c\cdot d(\mX, \mY) \leq \|f(\mX)-f(\mY)\|_2
\end{align*}
We say $f$ is bi-Lipschitz with repsect to $d$ if $f$ is both upper and lower Lipschitz. 
\end{definition}
We note that a Lower Lipshitz function $f$ must be  complete, but not vice-versa. We also note that $f$ is bi-Lipschitz with respect to $d$ if and only if the invariant metric $d_f(\mX,\mY)=\|f(\mX)-f(\mY)\|_2$ induced by $f$ is bi-Lipschitz equivalent to $d$ in the sense of Definition \ref{def:holderD}. 

\section{Geometric metrics}
In this section we introduce two $\G$ invariant metrics, and discuss the H\"{o}lder relationships between them. We then discuss a related $\Gplus$ invariant metric, and the concept of centralization. 
\paragraph{The PM metric} Arguably, the most popular $\G$ invariant metric is the 
Procrustes Matching (PM) metric  which  is defined by quotienting a norm over the group, namely 
\begin{equation}\label{eq:dg}
d_{\G}(\mX,\mY)=\left[\min_{(\pi,\mR,t)\in \G}\sum_{j=1}^n \|x_j-\mR y_{\pi(j)}+t\|_2^2 \right]^{1/2}=\min_{g\in \G} \|\mX-g\mY\|_F,\end{equation}
where $\|\cdot \|_F$ denote the Frobenius norm. 
This metric is a generalization of the classical Procrustes metric, where the minimum is only taken over rigid motions, but not over permutations. The classical Procrustes metric is appropriate for settings where the correspondences between the points $x_i$ and $y_i$ is already known, while the Procrustes Matching metric is appropriate for the unknown correspondence setting. The term Procrustes Matching is taken from \cite{maron2016point}.

There have been many works on utilizing the PM metric, and on how to compute it. For example, the celebrated ICP algorithm \cite{icp} can be seen as optimizing over $\G$ to compute the PM distance (although the mapping $\pi$ is typically not required to be a permutation). Other papers discussing this include \cite{rangarajan1997softassign,maron2016point,Dym_2019_ICCV,dym2017exact}.

If one is interested only in invariance to $\Gplus$, a similar metric can be defined by quotienting over $\Gplus$, namely 
$$\dgplus(\mX,\mY)=\min_{g\in \Gplus} \|\mX-g\mY\|_F,$$

\paragraph{Centralization}\label{centralization} A useful fact for later sections, is the simplification of the metrics $\dg$ and $\dgplus$ for centralized point clouds. For a given $\mX\in \RR^{d\times n}$, centralizing $\mX$ means translating all its coordinates by the mean $\frac{1}{n}\sum_{j=1}^n x_j$, so that the centralized point cloud now has a mean of $0_d$. It is well known  that, for $\mX,\mY$ which are centralized, if $g=(\pi,\mR,t)$ in $\G$ (or $\Gplus$) is the element for which $\|\mX-g\mY\|_F $ is minimal, then the translation component is zero $t=0_d$. 

\paragraph{The Hard-Gromov-Wasserstein Metric} An alternative $\G$ invariant metric can be obtained by comparing pairwise distances. It is known (see e.g., \cite{egnn}), that $\mX$ and $\mY$ are $\G$ isomorphic as point sets, if and only if the pairwise distances are all the same, namely $\|x_i-x_j\|_2=\|y_1-y_j\|_2 $ for all $1\leq i<j \leq n$. Based on this, we can define a $\Gplus$ invariant metric by comparing pairwise distances, and quotienting over  permutations, namely
\begin{equation}\label{eq:dHGW}
    d_{\D}(\mX,\mY) = \min_{\pi\in S_n} \sum_{i,j}| \,  \|x_i-x_j\|_2-\|y_{\pi(i)}-y_{\pi(j)}\|_2 \, |
\end{equation}
This metric closely resembles the Gromov-Wasserstein (GW) metric \cite{memoli2011gromov}, the difference being that the GW metric is defined via minimization over all doubly stochastic matrices, allowing 'soft correspondences', while the metric in \eqref{eq:dHGW} is defined by minimization over permutations only. For this reason we refer to this metric as the Hard-GW metric. Works focusing on the (Hard or 'soft') GW metric include \cite{koltun,solomon,dym2017ds++}.

We note that the GW metric is much more general: it defines distances between measured  metric spaces, up to isometric equivalence.  In our setting, we are only considering metric spaces obtained by sampling $n$ points in the Euclidean setting.

\subsection{Bi-H\"{o}lder equivalence of metrics}
Now that we have introduced  two $\G$-invariant metrics, we discuss the relationship between them via the notion of Bi-H\"{o}lder equivalence. The following theorem shows equivalence with H\"{o}lder exponents of $(2,1)$. After this theorem we will show that the $2$ H\"{o}lder exponent cannot be improved, and in particular the two metrics are not bi-Lipschitz equivalent. 

\begin{theorem}\label{thm:metrics}
For all $\mX,\mY\in \RR^{d\times n}$ we have 
   \begin{align*}
  d_{\D}(\mX,\mY)\leq 2n^{3/2} \cdot d_{\G}(\mX,\mY).
   \end{align*}
Additionally, for all $\mX,\mY\in \RR^{d\times n} $ satisfying $\|\mX\|_{1,2}\leq 1, \|\mY\|_{1,2}\leq 1 $, we have 
  $$\frac{1}{4n+2}d_{\G}^2(\mX,\mY) \leq  d_{\D}(\mX,\mY)$$
\end{theorem}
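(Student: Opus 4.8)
\emph{The upper bound $d_{\D}\le 2n^{3/2}d_{\G}$.} The plan is a direct estimate. Let $g=(\pi,\mR,t)\in\G$ attain the minimum in \eqref{eq:dg}, and put $z_j:=\mR y_{\pi(j)}+t$; since $\mR$ is orthogonal and $t$ cancels in differences, $(z_1,\dots,z_n)$ has exactly the same pairwise distances as $(y_{\pi(1)},\dots,y_{\pi(n)})$. Using this $\pi$ as a (suboptimal) competitor in \eqref{eq:dHGW}, the reverse triangle inequality, and then the triangle inequality,
\[
d_{\D}(\mX,\mY)\le\sum_{i,j}\bigl|\,\|x_i-x_j\|-\|z_i-z_j\|\,\bigr|\le\sum_{i,j}\bigl\|(x_i-z_i)-(x_j-z_j)\bigr\|\le 2n\sum_{i}\|x_i-z_i\|,
\]
and Cauchy--Schwarz gives $\sum_i\|x_i-z_i\|\le\sqrt n\,\bigl(\sum_i\|x_i-z_i\|_2^2\bigr)^{1/2}=\sqrt n\,d_{\G}(\mX,\mY)$.

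\emph{The lower bound.} First I would normalize the correspondence: choose $\pi^\ast$ optimal in \eqref{eq:dHGW} and relabel the columns of $\mY$ by it; being a $\G$-transformation this changes neither metric, and afterwards $d_{\D}(\mX,\mY)=\sum_{i,j}\bigl|\,\|x_i-x_j\|-\|y_i-y_j\|\,\bigr|$. Then pass from distances to squared distances: with $D^{X}_{ij}=\|x_i-x_j\|_2^2$, $D^{Y}_{ij}=\|y_i-y_j\|_2^2$, the hypothesis $\|\mX\|_{1,2},\|\mY\|_{1,2}\le 1$ keeps every $\|x_i-x_j\|,\|y_i-y_j\|$ within a constant (at most $2$), so factoring $|a^2-b^2|=|a-b|\,|a+b|$ entrywise gives $\sum_{i,j}|D^{X}_{ij}-D^{Y}_{ij}|\le 4\,d_{\D}(\mX,\mY)$. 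Finally, move to centered Gram matrices by classical multidimensional scaling: with the centering projection $J=I_n-\tfrac1n\mathbf{1}\mathbf{1}^{\top}$, one has $G^{X}:=-\tfrac12 JD^{X}J=\bar\mX^{\top}\bar\mX$ for the centered cloud $\bar\mX$, and similarly $G^{Y}$; using $\|AMB\|_{*}\le\|A\|_{\mathrm{op}}\|M\|_{*}\|B\|_{\mathrm{op}}$ for the nuclear norm (applied with the projection $J$) and $\|M\|_{*}\le\sum_{ij}|M_{ij}|$, this yields $\|G^{X}-G^{Y}\|_{*}\le\tfrac12\sum_{i,j}|D^{X}_{ij}-D^{Y}_{ij}|\le 2\,d_{\D}(\mX,\mY)$.

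The heart of the matter is then $d_{\G}(\mX,\mY)^2\le\|G^{X}-G^{Y}\|_{*}$. Since translations are quotiented out and the permutation is fixed, $d_{\G}(\mX,\mY)^2=\min_{\mR\in O(d)}\|\bar\mX-\mR\bar\mY\|_F^2$, and the plan is to bound this by $\|(G^{X})^{1/2}-(G^{Y})^{1/2}\|_F^2$, the squared Frobenius distance between the canonical PSD embeddings of the two Gram matrices. Expanding both sides — the first via the orthogonal Procrustes identity $\min_{\mR\in O(d)}\|\bar\mX-\mR\bar\mY\|_F^2=\|\bar\mX\|_F^2+\|\bar\mY\|_F^2-2\|\bar\mY\bar\mX^{\top}\|_{*}$, the second via an SVD of $\bar\mX,\bar\mY$ — reduces everything to the single inequality $\|\bar\mY\bar\mX^{\top}\|_{*}\ge\Tr\!\big((G^{X})^{1/2}(G^{Y})^{1/2}\big)$, which, after substituting the SVDs and writing $P$ for the (contractive) matrix of inner products between the right singular vectors of $\bar\mX$ and $\bar\mY$, is exactly the trace-norm/operator-norm duality $\langle A,B\rangle\le\|A\|_{*}\|B\|_{\mathrm{op}}$ together with $\|P\|_{\mathrm{op}}\le 1$. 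The Powers--St{\o}rmer inequality $\|(G^{X})^{1/2}-(G^{Y})^{1/2}\|_F^2\le\|G^{X}-G^{Y}\|_{*}$ for PSD matrices then closes the chain, giving $d_{\G}^2\le\|G^{X}-G^{Y}\|_{*}\le 2\,d_{\D}$ — which is in fact dimension-free and thus stronger than the stated $\tfrac{1}{4n+2}d_{\G}^2\le d_{\D}$ (presumably the authors' route estimates $\|G^{X}-G^{Y}\|_{*}$ more crudely, e.g.\ by $\sqrt n\,\|G^{X}-G^{Y}\|_F$ or by its entrywise $\ell_1$ norm).

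I expect the genuine obstacle to be this last step: recognizing $d_{\G}^2$ as a Procrustes functional on the Gram-matrix square roots and pushing it through the square-root perturbation bound. Everything else — the relabeling and the successive passages from distances to squared distances to centered Gram matrices — is routine bookkeeping, and the argument is insensitive to whether $d\le n$, since it uses only SVDs and the Procrustes identity.
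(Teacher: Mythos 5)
Your proof of the upper bound $d_{\D}\le 2n^{3/2}d_{\G}$ is essentially identical to the paper's: fix the optimal group element, apply the reverse triangle inequality, the triangle inequality, and Cauchy--Schwarz. No comment needed there.

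Your proof of the lower bound follows the same skeleton as the paper's (reduce to a Gram--matrix comparison, apply Powers--St{\o}rmer, and compare Procrustes distance to the Frobenius distance between Gram square roots), but the intermediate steps are genuinely different and yield a strictly stronger, dimension-free bound. The paper translates both clouds by their first point, writes each Gram entry $\langle x_i,x_j\rangle$ via norms and pairwise distances, bounds each entry separately, and sums to get the elementwise $\ell_1$ bound $\|\mG(\mX)-\mG(\mY)\|_1\le (4n+2)d_{\D}$, which picks up an extra factor of $n$ from the diagonal terms. You instead center the clouds, pass to the squared-distance matrices $D^{X},D^{Y}$, use $|a^2-b^2|=|a-b|\,|a+b|\le 4|a-b|$ to get $\sum_{ij}|D^{X}_{ij}-D^{Y}_{ij}|\le 4\,d_{\D}$, and then use the classical MDS identity $G^{X}=-\tfrac12 JD^{X}J$ together with $\|JMJ\|_{*}\le\|M\|_{*}\le\|M\|_{1}$ to obtain $\|G^{X}-G^{Y}\|_{*}\le 2\,d_{\D}$. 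The centering projection $J$ absorbs the diagonal terms cleanly, avoiding the $n$-dependence, and Powers--St{\o}rmer plus the Gram-square-root bound then give $d_{\G}^2\le 2\,d_{\D}$. You also re-derive the paper's cited Lemma from \cite{derksen2024bi} in-line: the reduction $\|\bar\mY\bar\mX^{\top}\|_{*}\ge\Tr\big((G^{X})^{1/2}(G^{Y})^{1/2}\big)$ via SVD, the contraction $P=V_X^{\top}V_Y$, and the duality $|\Tr(AB)|\le\|A\|_{*}\|B\|_{\mathrm{op}}$ is correct (note $P$ is genuinely only contractive, not orthogonal, when using thin SVDs), and is a nice self-contained alternative to citing the lemma. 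What the paper's route buys is brevity (it leans on the citation and does not need the MDS identity); what your route buys is the sharper constant $2$ in place of $4n+2$, plus a cleaner conceptual picture (centered Gram matrices and the double-centering projection are the natural objects here).
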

We recall that the norm~$\|\mX\|_{1,2}$ in the statement of the theorem, denotes an operator norm with respect to the 1 and 2 norms on the domain and image of $\mX$, respectively. Equivalently, it is equal to (see \cite{dym2013linear})
\begin{equation}\label{oneTwoNormDef}
    \|\mX\|_{1,2} = \max_{1\leq i\leq n} \|x_i\|_2,
\end{equation}
We note that the choices of norm throughout is just for convenience, we can employ equivalence of norms on finite dimensional spaces to obtain similar results for other norms. However, some sort of boundedness is required to obtain a H\"{o}lder inequality. This  can be seen from the fact that both metrics are homogenous. 

\begin{proof}[Proof of Theorem \ref{thm:metrics}]
We begin with the first direction of finding \textbf{upper Lipschitz bounds}. Let $\mX,\mY\in \RR^{d\times n}$. Then for every group element $(\pi,\mR,t)\in \G$, we have 
\begin{align*}
    d_{\D}(X, Y) &\leq  \sum_{i, j} | \|x_i - x_j\|_2 - \|y_{\pi(i)} - y_{\pi(j)}\|_2 | \\
    &=  \sum_{i, j} | \|x_i - x_j\|_2 - \|\mR(y_{\pi(i)} - y_{\pi(j)})\|_2 | \\
    &\leq \sum_{i, j} \| x_i - x_j - \mR(y_{\pi(i)} -y_{\pi(j)})-t + t\|_2 \\
    &\leq  \sum_{i, j}  \|x_i - \mR(y_{\pi(i)})-t\|_2 + \|x_j - \mR(y_{\pi(j)})-t\|_2  \\
    &= 2n \cdot  \sum_{i} \|x_i - \mR(y_{\pi(i)}) - t\|_2 \\ 
    &\leq 2n\cdot \sqrt{n}  \left[ \sum_{i}\|x_i - \mR(y_{\pi(i)}) - t\|_2^2 \right]^{1/2}
\end{align*}
Since this is true for all $(\pi,\mR,t)\in \G$, we can take the minimum over all elements in $\G$ to obtain our first direction, namely $d_{\D}(\mX,\mY)\leq 2n^{3/2}\cdot  d_{\G}(\mX,\mY) $. 

We now consider the second direction of obtaining a \textbf{H\"{o}lder bound of 2}. Choose any $\mX,\mY\in \RR^{d\times n}$, such  that
$\|\mX\|_{1,2},\|\mY\|_{1,2}\leq 1 $. 

 Note that since the metrics we are considering  are invariant to permutations, we can  assume without loss of generality that the permutation minimizing the expression in the definition of $d_{\D}$ is the identity. This implies that 
\begin{equation}\label{eq:sum}
    d_{\D}(\mX,\mY)=    \sum_{i,j} | \|x_i - x_j\|_2-\|y_i - y_j\|_2|
\end{equation}
Moreover, since the distances we are interested in are translation invariant, we can translate $\mX$ by $x_1$ and $\mY$ by $y_1$, and so assume without loss of generality that $x_1=0=y_1$. This assumption fixes the translation degree of freedom, and gives us $\|x_j\|_2=\|x_j-x_1\|_2 $ and $\|y_j\|_2=\|y_j-y_1\|_2 $. Note that the after applying this translation, the 2-norm of each point in $\mX$ and $\mY$ is bounded by 2, and not 1. The distance $\|x_i-x_j\|_2 $ is bounded by two as well (the translation does not effect this distance). 

The strategy of the proof is to bound the distance between the Gram matrix $\G(\mX)=\mX^T\mX $ of $\mX$ and the Gram matrix of $\mY$, and then use known results linking the Gram matrix and the Procrustes metric. The entry~$(i,j)$ of the Gram matrix of $\mX$ is the inner product $\langle x_i, x_j \rangle$, and this can be computed via norms and pairwise distances, as 
\[
\langle x_i, x_j \rangle = \frac{1}{2} \left( \|x_i\|_2^2 + \|x_j\|_2^2 - \|x_i - x_j\|_2^2 \right).
.\]
Thus, the difference between the entries~$(i,j)$ of the Gram matrices is bounded by 
\begin{align*}
2|\langle x_i,x_j \rangle-\langle y_i,y_j \rangle|&\leq |\, \|x_i\|_2^2-\|y_i\|_2^2 \,|+ |\, \|x_i-x_j\|_2^2-\|y_i-y_j\|_2^2 \,|+|\, \|x_j\|_2^2-\|y_j\|_2^2 \,|\\
&\leq 4|\, \|x_i\|_2-\|y_i\|_2 \,|+ 4|\, \|x_i-x_j\|_2-\|y_i-y_j\|_2 \,|+4|\, \|x_j\|_2-\|y_j\|_2 \,|
\end{align*}
where for the last inequality, we use the identity 
$$\|a\|_2^2-\|b\|_2^2=(\|a\|_2-\|b\|_2)(\|a\|_2+\|b\|_2),  $$ and the fact that by assumption the norms and pairwise differences of all points in $\mX,\mY$ are bounded by 2, as discussed above. Summing over the previous inequality, we obtain a bound on the elementwise-1 norm
\begin{align*}
2\|\mG(\mX)-\mG(\mY)\|_1&=2\sum_{i,j} |\langle x_i,x_j\rangle-\langle y_i,y_j\rangle| \\
&\leq \sum_{i,j} 4|\, \|x_i\|_2-\|y_i\|_2 \,|+ 4|\, \|x_i-x_j\|_2-\|y_i-y_j\|_2 \,|+4|\, \|x_j\|_2-\|y_j\|_2 \,|\\
&\leq 8n\sum_{i}| \, \|x_i\|_2-\|y_i\|_2 \, |+4\sum_{i,j}|\, \|x_i-x_j\|_2-\|y_i-y_j\|_2 \,|\\
&\leq (8n+4)\sum_{i,j}|\, \|x_i-x_j\|_2-\|y_i-y_j\|_2 \,|\\
&=(8n+4) d_{\D}(\mX,\mY).
\end{align*}
Next, we can now use the fact that the  Gram matrix is positive semi-definite (PSD), and two lemmas  taken from previous work. The first lemma is taken from \cite{powers1970free}:
\begin{lemma}[\cite{powers1970free}, Lemma 4.1]
    For any two PSD matrices $\mG_1,\mG_2$ of the same dimensions,  it holds that 
    \begin{equation*}
        |\sqrt{\mG_1}-\sqrt{\mG_2}|^2_F\leq\|\mG_1-\mG_2\|_T. 
    \end{equation*}
\end{lemma}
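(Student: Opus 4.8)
The statement is the classical Powers--Størmer inequality, so there is no new mathematics to invent; the plan is to reconstruct a short self-contained proof. The engine is a clever substitution: set $C \defeq \sqrt{\mG_1}-\sqrt{\mG_2}$ and $D \defeq \sqrt{\mG_1}+\sqrt{\mG_2}$. Both are symmetric, $D$ is PSD as a sum of PSD matrices, and a one-line expansion gives the commutator-type identity $CD+DC = 2(\mG_1-\mG_2)$. Since $\|\sqrt{\mG_1}-\sqrt{\mG_2}\|_F^2 = \Tr(C^2)$, the entire problem reduces to bounding $\Tr(C^2)$ from above by the trace norm $\|\mG_1-\mG_2\|_T$.

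First I would diagonalize $C$: let $\{e_i\}$ be an orthonormal eigenbasis with $Ce_i = c_i e_i$, so that $\Tr(C^2)=\sum_i c_i^2$. Feeding each $e_i$ into the identity $CD+DC = 2(\mG_1-\mG_2)$ and using symmetry of $C$ yields $\innerproduct{e_i}{(\mG_1-\mG_2)e_i} = c_i\,\innerproduct{e_i}{De_i}$. The decisive pointwise estimate is $|c_i| = \bigl|\innerproduct{e_i}{\sqrt{\mG_1}e_i}-\innerproduct{e_i}{\sqrt{\mG_2}e_i}\bigr| \le \innerproduct{e_i}{\sqrt{\mG_1}e_i}+\innerproduct{e_i}{\sqrt{\mG_2}e_i} = \innerproduct{e_i}{De_i}$, which holds exactly because $\sqrt{\mG_1}$ and $\sqrt{\mG_2}$ are PSD, making both diagonal terms nonnegative. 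Combining these two facts, $\Tr(C^2) = \sum_i c_i^2 \le \sum_i |c_i|\,\innerproduct{e_i}{De_i} = \sum_i \bigl|\innerproduct{e_i}{(\mG_1-\mG_2)e_i}\bigr|$.

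To finish, I would use the elementary fact that the sum of absolute diagonal entries of a symmetric matrix $M$, in \emph{any} orthonormal basis, is at most $\|M\|_T$: letting $\Lambda$ be the diagonal $\pm1$ matrix carrying the signs of those entries in the basis $\{e_i\}$, we get $\sum_i \bigl|\innerproduct{e_i}{Me_i}\bigr| = \Tr(\Lambda M) \le \|\Lambda\|_{\mathrm{op}}\,\|M\|_T = \|M\|_T$ by duality of the operator and trace norms. Applying this with $M=\mG_1-\mG_2$ closes the chain and gives $\|\sqrt{\mG_1}-\sqrt{\mG_2}\|_F^2 \le \|\mG_1-\mG_2\|_T$. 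I do not expect a real obstacle: the only care needed is the kernel of $C$ (eigenvectors with $c_i=0$ contribute nothing to either side, so they are harmless), and the genuine content is simply spotting the substitution $C,D$ and the identity $CD+DC=2(\mG_1-\mG_2)$ — everything downstream is routine linear algebra, which is presumably why the authors just cite \cite{powers1970free}.
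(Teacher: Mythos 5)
Your reconstruction is correct: the substitution $C=\sqrt{\mG_1}-\sqrt{\mG_2}$, $D=\sqrt{\mG_1}+\sqrt{\mG_2}$, the identity $CD+DC=2(\mG_1-\mG_2)$, the pointwise bound $|c_i|\le\langle e_i,De_i\rangle$ in an eigenbasis of $C$, and the final step via $\sum_i|\langle e_i,Me_i\rangle|=\Tr(\Lambda M)\le\|M\|_T$ is exactly the classical Powers--St{\o}rmer argument, and every step checks out. The paper itself does not reprove this lemma (it simply cites \cite{powers1970free}), so there is no in-paper proof to compare against; your proof faithfully reproduces the cited source's argument and would be a valid self-contained substitute for the citation.
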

Here $\|\cdot \|_F$ denotes the Frobenius norm (the 2-norm of the matrix singular values) and $\|\cdot \|_T$ denotes the trace norm, or nuclear norm (the 1-norm of the matrix singular values). It is known that this norm is always smaller or equal to the elementwise-1 norm. 

The second lemma is taken from \cite{derksen2024bi}
\begin{lemma}[Theorem 3 in \cite{derksen2024bi}]
For all $\mX,\mY\in \RR^{d\times n}$,
$$\min_{\mR\in O(d)}\|\mR\mX-\mY\|_F\leq \|\sqrt{\mG(\mX)}-\sqrt{\mG(\mY)}\|_F \leq \sqrt 2 \min_{\mR\in O(d)}\|\mR\mX-\mY\|_F  .$$
\end{lemma}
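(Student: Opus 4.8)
The plan is to reduce the two‑sided estimate to a pair of purely spectral inequalities about the positive semidefinite matrices $A \defeq \sqrt{\mG(\mX)}$ and $B \defeq \sqrt{\mG(\mY)}$ in $\RR^{n\times n}$. First I would take the polar decompositions $\mX = \mQ_X A$ and $\mY = \mQ_Y B$, where $\mQ_X,\mQ_Y$ are partial isometries (so $\mQ_X^\top\mQ_X$ is the orthogonal projection onto $\Ima(A)$, and likewise for $B$); these exist regardless of how $d$ and $n$ compare. Expanding the square and using $\|A\|_F^2 = \Tr(A^2) = \Tr(\mG(\mX)) = \|\mX\|_F^2$ (and similarly for $\mY$) gives $\|A - B\|_F^2 = \|\mX\|_F^2 + \|\mY\|_F^2 - 2\Tr(AB)$. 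On the other hand, the classical Procrustes computation gives $\min_{\mR \in \O(d)} \|\mR\mX - \mY\|_F^2 = \|\mX\|_F^2 + \|\mY\|_F^2 - 2\|\mX\mY^\top\|_T$, where $\|\cdot\|_T$ is the trace (nuclear) norm; and since conjugation by the partial isometries leaves singular values unchanged, $\|\mX\mY^\top\|_T = \|\mQ_X AB\,\mQ_Y^\top\|_T = \|AB\|_T$. Subtracting the two identities, the lemma becomes exactly the pair of scalar inequalities
\begin{equation*}
\Tr(AB) \;\le\; \|AB\|_T \;\le\; \tfrac14\|A + B\|_F^2 .
\end{equation*}

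The left inequality is routine: $\Tr(AB) \le |\Tr(AB)| \le \|AB\|_T$ by trace/operator‑norm duality. (There is also a one‑line geometric proof of this direction: $\sqrt{\mG(\mX)}$ has the same Gram matrix as $\mX$, so after padding with zero rows it equals $\mR_X\mX$ for some orthogonal $\mR_X$, whence $\|\sqrt{\mG(\mX)} - \sqrt{\mG(\mY)}\|_F = \|\mR_X\mX - \mR_Y\mY\|_F \ge \min_{\mR}\|\mR\mX - \mY\|_F$.) The content of the lemma is the upper bound $4\|AB\|_T \le \|A+B\|_F^2$. For this I would use the variational identity $\|AB\|_T = \max_{O \in \O(n)} \Tr(OAB)$. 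Fixing $O$ and rewriting, by cyclicity of the trace, $\Tr(OAB) = \langle A^{1/2}O^\top B^{1/2},\, A^{1/2}B^{1/2}\rangle_F$, Cauchy--Schwarz in the Frobenius inner product gives
\begin{equation*}
\Tr(OAB) \;\le\; \big\|A^{1/2}O^\top B^{1/2}\big\|_F\,\big\|A^{1/2}B^{1/2}\big\|_F \;=\; \sqrt{\Tr\!\big(OAO^\top B\big)}\,\sqrt{\Tr(AB)} .
\end{equation*}

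Now $OAO^\top$ has the same spectrum as $A$, so von Neumann's trace inequality bounds $\Tr(OAO^\top B) \le \sum_i \lambda_i^{\downarrow}(A)\,\lambda_i^{\downarrow}(B)$ uniformly in $O$; combined with the term‑wise AM--GM bound $\sum_i \lambda_i^{\downarrow}(A)\lambda_i^{\downarrow}(B) \le \tfrac12(\|A\|_F^2 + \|B\|_F^2)$ and taking the maximum over $O$, I get $\|AB\|_T \le \sqrt{\tfrac12(\|A\|_F^2 + \|B\|_F^2)\,\Tr(AB)}$. A final AM--GM (with $p = \|A\|_F^2 + \|B\|_F^2$ and $q = \Tr(AB)$, using $2\sqrt{p\cdot 2q} \le p + 2q$) then yields $4\|AB\|_T \le p + 2q = \|A+B\|_F^2$, as needed. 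I expect the right inequality to be the only real obstacle, and the delicate point is getting the constant exactly $\sqrt{2}$: the cruder bound $\|AB\|_T \le \sum_i \lambda_i^{\downarrow}(A)\lambda_i^{\downarrow}(B)$ does \emph{not} suffice, and the Cauchy--Schwarz‑refined version $\|AB\|_T \le \sqrt{\big(\sum_i \lambda_i^{\downarrow}(A)\lambda_i^{\downarrow}(B)\big)\,\Tr(AB)}$ is essential. A secondary bookkeeping point is that $\mX,\mY$ need not be full rank and $d$ need not equal $n$; this is handled uniformly by padding with zero rows, which changes neither $\sqrt{\mG(\cdot)}$ nor $\min_{\mR \in \O(d)}\|\mR\mX - \mY\|_F$.
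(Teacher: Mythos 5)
The paper does not prove this lemma --- it is cited verbatim as Theorem~3 of \cite{derksen2024bi}, so there is no in-paper proof to compare against; I can only assess your argument on its own merits. Your proof is correct. The reduction to the scalar inequalities $\Tr(AB)\leq\|AB\|_T\leq\tfrac14\|A+B\|_F^2$ (with $A=\sqrt{\mG(\mX)}$, $B=\sqrt{\mG(\mY)}$) is carried out accurately: the Procrustes identity $\min_{\mR}\|\mR\mX-\mY\|_F^2=\|\mX\|_F^2+\|\mY\|_F^2-2\|\mX\mY^\top\|_T$ and the expansion $\|A-B\|_F^2=\|\mX\|_F^2+\|\mY\|_F^2-2\Tr(AB)$ are standard, and the bookkeeping claim $\|\mX\mY^\top\|_T=\|AB\|_T$ does hold because $\Ima(AB)\subseteq\Ima(A)$ and $\Ima((AB)^\top)\subseteq\Ima(B)$, so the partial isometries $\mQ_X,\mQ_Y$ act isometrically exactly where they need to. The nontrivial upper bound $4\|AB\|_T\leq\|A+B\|_F^2$ is then correctly established: the dual characterization $\|AB\|_T=\max_{O\in O(n)}\Tr(OAB)$, the Cauchy--Schwarz factorization $\Tr(OAB)=\langle A^{1/2}O^\top B^{1/2},\,A^{1/2}B^{1/2}\rangle_F$, von Neumann's inequality for $\Tr(OAO^\top B)$ (using that $OAO^\top$ and $A$ are isospectral), and the two AM--GM steps all check out, and the chain
\begin{equation*}
4\|AB\|_T\;\leq\;4\sqrt{\tfrac12\bigl(\|A\|_F^2+\|B\|_F^2\bigr)\Tr(AB)}\;\leq\;\|A\|_F^2+\|B\|_F^2+2\Tr(AB)\;=\;\|A+B\|_F^2
\end{equation*}
delivers exactly the constant $\sqrt2$. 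Two minor presentation notes: your alternative ``one-line'' proof of the left inequality silently uses different padding conventions depending on whether $d<n$ or $d>n$, so if you keep it you should say which padding is meant; and since the paper already invokes the Powers--St\o{}rmer bound \cite{powers1970free} in the adjacent lemma, it would be worth remarking that the pair $\Tr(AB)\leq\|AB\|_T\leq\tfrac14\|A+B\|_F^2$ you prove from scratch sits in the same circle of ideas.
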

Piecing all of this together, we obtain for all $\mX,\mY$ such  that
$\|\mX\|_{1,2},\|\mY\|_{1,2}\leq 1 $, our desired inequality
\begin{align*}
d_{\G}^2(\mX,\mY)&\leq \min_{\mR\in O(d)}\|\mR\mX-\mY\|_F^2\leq \|\sqrt{\mG(\mX)}-\sqrt{\mG(\mY)}\|_F^2\\
&\leq \|\mG(\mX)-\mG(\mX)\|_T\leq \|\mG(\mX)-\mG(\mX)\|_1\leq (4n+2)d_{\D}(\mX,\mY)
\end{align*}
\end{proof}
We next show that the H\"{o}lder exponent of 2 in the previous theorem cannot be improved:
\begin{theorem}\label{thm:metrics}
For natural numbers $n\geq 3,d \geq 2$, let $c,\alpha>0$ such that, for all $\mX,\mY\in \RR^{d\times n} $ satisfying $\|\mX\|_{1,2}\leq 1, \|\mY\|_{1,2}\leq 1 $, we have 
  \begin{equation}\label{eq:general_holder}
  cd_{\G}^\alpha(\mX,\mY) \leq   d_{\D}(\mX,\mY).
  \end{equation}
  Then $\alpha \geq 2$. 
\end{theorem}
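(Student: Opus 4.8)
The plan is the standard device for ruling out a H\"older exponent: construct a one-parameter family of pairs $\mX_\epsilon,\mY_\epsilon$ along which $d_{\G}$ behaves like $\epsilon$ while $d_{\D}$ behaves like $\epsilon^2$, so that \eqref{eq:general_holder} forces $c\,\epsilon^{\alpha}\lesssim \epsilon^{2}$ for all small $\epsilon>0$, hence $\alpha\ge 2$. Concretely, fix $n\ge 3$, $d\ge 2$, and let $\mX_\epsilon\in\RR^{d\times n}$ be a \emph{collinear} configuration: all $n$ points lie on the first coordinate axis inside the unit ball, two of them ($x_1,x_2$) at mutual distance bounded away from $0$, and every point at distance bounded away from $0$ from a designated point $x_3$. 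Let $\mY_\epsilon$ be obtained from $\mX_\epsilon$ by translating the single point $x_3$ by $\tfrac{\epsilon}{2}e_2$ (the second coordinate direction) and leaving every other point fixed; for $d>2$ all remaining coordinates stay zero, so the whole picture lives in a $2$-plane. For $\epsilon$ small, $\|\mX_\epsilon\|_{1,2},\|\mY_\epsilon\|_{1,2}\le 1$.

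Bounding $d_{\D}$ from above is a direct computation: plug the identity permutation into \eqref{eq:dHGW}. Since $\mX_\epsilon$ and $\mY_\epsilon$ differ only in the point $x_3$, the only surviving terms are $\bigl|\,\|x_i-x_3\|_2-\|x_i-y_3\|_2\,\bigr|$ with $x_i$ fixed, $i\neq 3$. The key point is that the displacement $\tfrac{\epsilon}{2}e_2$ is \emph{orthogonal} to $x_i-x_3$ (both lie on the axis), so $\|x_i-y_3\|_2=\sqrt{\|x_i-x_3\|_2^2+\epsilon^2/4}$, which, since $\|x_i-x_3\|_2$ is bounded below by a positive constant, differs from $\|x_i-x_3\|_2$ by $O(\epsilon^2)$. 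Summing the $2(n-1)$ such terms yields $d_{\D}(\mX_\epsilon,\mY_\epsilon)\le C_n\,\epsilon^{2}$. (This orthogonality is exactly why the extra points are placed collinear with $x_3$ and at distance bounded away from $0$ from it, rather than, say, coinciding with $x_3$; a displacement having a component along $x_i-x_3$ would change that distance at \emph{first} order in $\epsilon$, ruining the estimate.)

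Bounding $d_{\G}$ from below is the crux. The upper Lipschitz inequality $d_{\D}\le 2n^{3/2}d_{\G}$ proved above only gives $d_{\G}\gtrsim d_{\D}\gtrsim\epsilon^2$, too weak; and the obvious ``moment'' argument (smallest singular value / variance transverse to the axis) also detects the perturbation only at order $\epsilon^2$, since a transverse displacement of size $\epsilon$ changes variances by $\epsilon^2$. Instead I would use a \emph{first-order} $\G$-invariant that already sees the departure from collinearity, namely
\[
A(\mZ)\ :=\ \max_{i<j<k}\ \mathrm{area}\!\left(z_i,z_j,z_k\right).
\]
This functional is permutation invariant (a permutation merely relabels triples), $O(d)$- and translation-invariant (triangle area is an isometry invariant, depending only on the Gram matrix of the edge vectors), and Lipschitz in $\mZ$ with respect to $\|\cdot\|_F$ on any bounded set, with Lipschitz constant $L$ depending only on the radius (a max of finitely many functions that are each Lipschitz there). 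Now $A(\mX_\epsilon)=0$ since $\mX_\epsilon$ is collinear, while $A(\mY_\epsilon)\ge \mathrm{area}(x_1,x_2,y_3)=\Theta(\epsilon)$. If some $g\in\G$ achieved $\|\mX_\epsilon-g\mY_\epsilon\|_F=\delta\le 1$, then each point of $g\mY_\epsilon$ lies within $\delta$ of a point of $\mX_\epsilon$, hence in the ball of radius $2$, so by $\G$-invariance and Lipschitzness $\Theta(\epsilon)=A(\mY_\epsilon)=A(g\mY_\epsilon)\le A(\mX_\epsilon)+L\delta=L\delta$, i.e. $\delta\gtrsim\epsilon$. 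Hence $d_{\G}(\mX_\epsilon,\mY_\epsilon)\gtrsim\epsilon$. Substituting both estimates into \eqref{eq:general_holder} gives $c\,(c'\epsilon)^{\alpha}\le C_n\,\epsilon^{2}$ for all sufficiently small $\epsilon>0$; dividing by $\epsilon^2$ and letting $\epsilon\to 0^{+}$ forces $\alpha\ge 2$.

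The main obstacle is precisely the lower bound $d_{\G}(\mX_\epsilon,\mY_\epsilon)\gtrsim\epsilon$: one must avoid the two natural-but-too-weak routes (going through $d_{\D}$, or through second moments of the configuration, each of which only certifies order $\epsilon^2$) and instead isolate a $\G$-invariant, Lipschitz quantity varying at order $\epsilon$. Triangle area does this, and the only genuine care needed for general $n\ge 3$ is keeping the padding points collinear with, and at distance bounded away from $0$ from, the perturbed point, so that the $d_{\D}$ estimate stays quadratic in $\epsilon$.
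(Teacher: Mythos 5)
Your construction is correct and the overall plan—perturb a degenerate, ``flat'' configuration transversally so that pairwise distances change at order $\epsilon^2$ while the configuration itself moves at order $\epsilon$—is the same as the paper's. Where you genuinely diverge is the lower bound $d_\G(\mX_\epsilon,\mY_\epsilon)\gtrsim\epsilon$, which is indeed the crux. The paper avoids any auxiliary invariant: it perturbs \emph{two} points by $\pm\epsilon$ so that the centroid stays at the origin, which pins the optimal translation to $0$; it then lower-bounds $d_\G$ by the norm difference of a single column, exploiting that the unperturbed configuration has a point exactly at the origin whose norm jumps from $0$ to $\epsilon$. Your route is a single-point perturbation certified by a Lipschitz, $\G$-invariant functional (the maximum triangle area) that vanishes identically on the collinear base configuration and is $\Theta(\epsilon)$ on the perturbed one. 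Both arguments are sound. The paper's version is somewhat more elementary and self-contained (it needs only the triangle inequality for norms, once centralization is arranged), while yours is more conceptual: it makes explicit the structural mechanism at work—namely, that a Lipschitz $\G$-invariant which is zero on degenerate configurations but grows linearly under transverse perturbation certifies a linear lower bound on $d_\G$—and that device is reusable whenever one needs a first-order ``departure from degeneracy'' witness. One small point worth stating explicitly in your write-up: the maximum over $\binom{n}{3}$ triangle areas is Lipschitz on bounded sets because each $2\times 2$ minor of $[z_j-z_i\ \ z_k-z_i]$ is a bounded bilinear form there, and a maximum of finitely many Lipschitz functions is Lipschitz; you assert this in passing but it is the load-bearing regularity fact.
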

\begin{proof}
Let $c,\alpha$ be positive numbers such that \eqref{eq:general_holder} is satisfied.  
Choose $a_1=0_{d-1}\in \RR^{d-1}$ and $a_2,\ldots,a_n\in \RR^{d-1}$ such that 
\begin{enumerate}
\item $a_1,\ldots,a_n $ are pairwise distinct
\item $1> \|a_j\|, \quad \forall j=1,\ldots,n $ and 
\item  $\sum_{j=1}^n a_j=0$. 
\end{enumerate}
Such a choice  is possible as long as $n\geq 3$.  For $\epsilon\geq 0$ denote
$$ 
\mX^{\epsilon}=
\begin{bmatrix}
a_1=0_{d-1}& a_2 & a_3 & \ldots & a_n\\
\epsilon & -\epsilon & 0 & \ldots &0
\end{bmatrix}
$$
and denote $\mX=\mX^0$. For all $\epsilon$ small enough, the norms of the columns of $\mX^\epsilon$ are all less than one, and so are in the domain $\|\mX^
\epsilon\|_{1,2}\leq 1$ which  we are considering. 

Next, we claim that for all $\epsilon>0$ small enough we have $\dg(\mX,\mX^\epsilon)\geq  \epsilon$. To see this,  note that 
$\mX^\epsilon$ and $\mX$ are centralized, and therefore the optimal translation  is zero. If $(\pi,\mR)$ are the optimal permutation-rotation pair aligning $ \mX$ with $\mX^{\epsilon}$, and the column $x_1^{\epsilon}$ of $\mX^{\epsilon}$ is assigned to the column $x_{\pi(1)} $ of $\mX$, then 
$$\dg(\mX,\mX^\epsilon)\geq \|x_1^{\epsilon}-\mR x_{\pi(1)}\|_2\geq | \, \|x_1^{\epsilon}\|_2-\|x_{\pi(1)}\|_2   \, |=|\epsilon-\|a_{\pi(1)}\|_2|    .$$
if $\pi(1)\neq 1$, then for all small enough $\epsilon$, this expression will be larger than $\epsilon$ since $a_{\pi(1)}\neq a_1=0_{d-1} $. If $\pi(1)=1$, then this expression will be exactly $\epsilon$. In any case, we obtain that the distance between $\mX$ and $\mX^\epsilon$ cannot be less than $\epsilon$. 

Next, we consider $\dd(\mX^\epsilon,\mX) $. We note that $\|x_i-x_j\|_2=\|x^\epsilon_i-x^\epsilon_j\|_2 $ when both $i,j$ are larger than $2$. If $i\in \{1,2\}$ and $j>2$, then for all $\epsilon$ with $\frac{|\epsilon|}{\|a_i-a_j\|_2}\leq 1/2$, we have an inequality of the form 
\begin{align*}
| \, \|x_i-x_j\|_2-\|x^\epsilon_i-x^\epsilon_j\|_2 \, |&=| \, \|a_i-a_j\|_2-\sqrt{\|a_i-a_j\|_2^2+\epsilon^2}  \, |\\
&= \|a_i-a_j\|_2\left(1-\sqrt{1+\frac{\epsilon^2}{\|a_i-a_j\|_2^2}} \right)\\
&\leq C\frac{\epsilon^2}{\|a_i-a_j\|_2}
\end{align*}
for an appropriate $C$ which can be derived from the Taylor expansion of $\sqrt{1+x}$ around $x=0$.  A similar bound can be obtained when $i=1,j=2$, and so overall we can show that such a bound holds for all $i,j$. It follows that for all small enough $\epsilon>0$, there exists some $\tilde C$ which does not depend on $
\epsilon$, such that 
$$ c\epsilon^\alpha\leq cd_{\G}^\alpha(\mX,\mX^\epsilon) \leq   d_{\D}(\mX,\mX^\epsilon)\leq \tilde C \epsilon^2 .$$
Since this inequality holds for all small enough positive $\epsilon$, this inequality can only hold if $\alpha \geq 2$.
\end{proof}

\section{1-WL Geometric Models and bi-Lipschitzness}
Now that we have discussed $\G$ invariant metrics on $\RR^{d\times n}$, we turn to discuss $\G$ invariant models on $\RR^{d\times n}$, and the challenge of designing them so that they are bi-Lipschitz equivalent to the metric $\dg$. 

Our focus will be on models which apply graph neural networks to pairwise features which are invariant to rigid motions. Namley, each point set in $\RR^{d\times n}$ is represented by a weighted full graph with $n$ nodes, where all pairs of nodes are assigned a weight $\|x_i-x_j\|_2 $. Since these weights are invariant to rigid motions, it can be shown \cite{	lim2022sign,is_distance} that one can obtain $\G$ invariant neural networks by applying permutation invariant neural networks to this weighted graph. 

 Among the most popular graph neural networks are the family of message passing neural networks, which iteratively update node features by aggregating information over node neighbors. This procedure can be adapted to the geometric point set setting in the following way:  we first 
 define an initial 'blank' feature for each node, namely, $c^{0}_{i} =  
 0 $ for all $i\in [n]$. Next, for all  $t=0,\ldots,T-1$, where $T$ is the number of iterations prescribed by the 'user', we define
\begin{align*}
    c^{t+1}_{i} &= \phi_{t}(c^{t}_{i}, \psi_{t}(\lms(c^{t}_i,c^{t}_j, \|x_i - x_j\|_2) \mid j \in [n]\rms )) .
    \end{align*}
    Here, $\psi_t$ is a function applied to a \emph{multiset}, which means that is needs to be invariant to the order in which the points are arranged. This is required to ensure permutation invariance of the procedure. After $T$ iterations are concludes, a final $\G$ invariant global feature    $c_{\mathrm{global}}$ is obtained by applying an additional multiset function, call $\text{ReadOut}$, to the final features of all nodes, namely  
    \begin{equation}\label{eq:readout}
    c_{\mathrm{global}} = \text{ReadOut}(\lms c^{T}_{1}, \ldots, c^{T}_{n}\rms).
\end{equation}
We will denote the function obtained by such a procedure with $T$ iterations, by 
$$\Phi_T(\mX)=c_{\mathrm{global}}(\mX) $$

We name models which are built in this way \emph{1-WL geometric models} (the source to this name is the connection between the expressive power of MPNNs and the 1-WL graph isomorphism test). 

There are many popular geometric models which fall under the category of 1-WL geometric models. Examples include SchNet \cite{SChNet}, SphereNet \cite{SphereNet},DimeNet \cite{DimeNet} and EGNN \cite{egnn}. For example, in (the invariant version of) EGNN the feature update procedure is given by
$$c_i^{t+1}=\phi_t\left(c_i^t,\sum_j \eta_t(c_i^t,c_j^t,\|x_i-x_j\|_2^2)\right) ,$$
where $\phi_t$ and $\eta_t$ are fully connected neural networks, whose internal paremters are optimized in accordance with the specified learning problem. In other words, EGNN enforces the requirements that $\psi_t$ is a multiset function by applying the same function $\eta_t$ to all multiset elements, and then summing over all results. 

A 1-WL geometric model, as defined above, is always $\G$ invariant. An important question is whether it is complete, in the sense of
Definition \ref{def:inv_complete}. Namely, can the model be designed so that $\Phi_T(\mX)=\Phi_T(\mY) $ if and only if $\mX$ and $\mY$ belong to the same $\G$ orbit? 

This question has been studied in recent work. In \cite{pozdnyakov2022incompleteness,is_distance} it is shown that for $d=3$, these models are not complete (although they are 'generically complete', see \cite{gramnet,Li_completeness}. However, they are complete when $d=2$, as shown in \cite{three_iterations}. Since completeness is a prerequisite to bi-Lipschitzness, the question of bi-Lipschitzness for these models is only relevant for $d=2$. We will now address this issue. 

Our first result will be that 1-WL geometric models are Lipschitz with respect to the $\dd$ distance. As a corollary, we will see that they cannot be lower-Lipschitz with respect to $\dg$, our prime metric of interest. This result employs the assumption that the functions $\phi_t,\psi_t$ and the ReadOut functions are all Lipschitz, this assumption is essentially always fulfilled in practice (see \cite{sort-MPNN} for discussion). We note that when we say that the multiset functions, ReadOut and $\psi_t$, are Lipschitz, this can be interperted as saying that we think of them as functions on matrices whose columns are the multiset elements, and Lipschitzness is understood in the standard Euclidean sense. Equivalently, this can also be thought of as Lipschitzness in the sense of the Wasserstein distance on multisets (introduced later on).  

\begin{theorem}\label{thm:lip}
For given natural $n,d$ and $T$, let $\Phi_{T}$ be a 1-WL geometric models, and assume that the functions  $\phi_t,\psi_t$  and ReadOut are upper Lipschitz.  Then $\Phi_T $ is Lipschitz with respect to the metric  $d_{\D}$.
\end{theorem}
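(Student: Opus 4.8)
The plan is to track, layer by layer, how far the intermediate node features computed on two point clouds can drift apart, exploiting the fact that a 1-WL geometric model sees the input geometry only through the matrix of pairwise distances — which is precisely the object that $\dd$ compares.

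\textbf{Reduction.} Fix $\mX,\mY\in\RR^{d\times n}$. Since $\Phi_T$ is permutation invariant while $\dd$ is defined by a minimum over $S_n$, I would first relabel the columns of $\mY$ so that the identity permutation attains this minimum; then $\dd(\mX,\mY)=\sum_{i,j}\big|\,\|x_i-x_j\|_2-\|y_i-y_j\|_2\,\big|$. Write $D^X_{ij}=\|x_i-x_j\|_2$, $D^Y_{ij}=\|y_i-y_j\|_2$, and let $c^t_i,\tilde c^t_i$ denote the features produced by the model on $\mX$ and $\mY$ respectively, with $a^t_i=\psi_t(\lms(c^t_i,c^t_j,D^X_{ij})\mid j\in[n]\rms)$ and $\tilde a^t_i$ the analogous aggregate for $\mY$.

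\textbf{Layerwise induction.} The heart of the argument is to show by induction on $t$ that there is a constant $A_t$, depending only on $n$ and on the Lipschitz constants of $\phi_0,\dots,\phi_{t-1},\psi_0,\dots,\psi_{t-1}$, with $\|c^t_i-\tilde c^t_i\|_2\le A_t\cdot\dd(\mX,\mY)$ for every $i\in[n]$. The base case holds with $A_0=0$ because $c^0_i=\tilde c^0_i=0$. For the step, I would pair the $j$-th element of the multiset at node $i$ on the $\mX$ side with the $j$-th element on the $\mY$ side — a valid, if generally suboptimal, coupling — and invoke Lipschitzness of $\psi_t$ (read as Lipschitzness of the function of the matrix whose columns are the multiset elements, i.e.\ Lipschitzness in Wasserstein-$1$ distance) to obtain
\[
\|a^t_i-\tilde a^t_i\|_2 \le L_{\psi_t}\Big( \sqrt n\,\|c^t_i-\tilde c^t_i\|_2 + \big( \sum\nolimits_j\|c^t_j-\tilde c^t_j\|_2^2 \big)^{1/2} + \big( \sum\nolimits_j|D^X_{ij}-D^Y_{ij}|^2 \big)^{1/2} \Big).
\]
The key elementary fact is that $\big(\sum_j b_j^2\big)^{1/2}\le\sum_j b_j$ for $b_j\ge0$: it turns these $\ell^2$ aggregations into $\ell^1$ ones, so the distance-matrix term is at most $\dd(\mX,\mY)$ and, by the inductive hypothesis, each feature term is at most $\sqrt n\,A_t\,\dd(\mX,\mY)$. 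Plugging $\|a^t_i-\tilde a^t_i\|_2\le L_{\psi_t}(1+2\sqrt n\,A_t)\,\dd(\mX,\mY)$ into $c^{t+1}_i=\phi_t(c^t_i,a^t_i)$ and using Lipschitzness of $\phi_t$ yields the recursion $A_{t+1}=L_{\phi_t}\big(A_t+L_{\psi_t}(1+2\sqrt n\,A_t)\big)$, hence a finite $A_T$.

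\textbf{Readout and conclusion.} Running the same coupling-plus-Lipschitz estimate through $\text{ReadOut}$ on $\lms c^T_1,\dots,c^T_n\rms$ versus $\lms\tilde c^T_1,\dots,\tilde c^T_n\rms$ gives $\|\Phi_T(\mX)-\Phi_T(\mY)\|_2\le L_{\text{ReadOut}}\big(\sum_i\|c^T_i-\tilde c^T_i\|_2^2\big)^{1/2}\le L_{\text{ReadOut}}\sqrt n\,A_T\,\dd(\mX,\mY)$, which is the claimed Lipschitz bound with $C=L_{\text{ReadOut}}\sqrt n\,A_T$. I expect the one genuinely delicate point to be making the multiset step rigorous: one must fix the meaning of ``$\psi_t$ Lipschitz on multisets'' (as above, via the matrix of columns, equivalently Wasserstein-$1$) and then note that an arbitrary pairing of elements only upper-bounds that distance, which is exactly what is needed. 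Everything else is bookkeeping of the recursion together with the $\ell^2\le\ell^1$ trick — and it is this trick that keeps the dependence on $\dd$ linear rather than a square root, and, importantly, yields a single global constant without any boundedness hypothesis on the point clouds.
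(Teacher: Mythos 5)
Your proof is correct and follows essentially the same strategy as the paper's: reduce to the identity permutation using permutation invariance of $\Phi_T$, then propagate Lipschitz bounds layer by layer from the distance matrix through the aggregation and update functions to the readout. The paper's version is considerably more terse (it simply asserts that each $q_i^t, c_i^t$ is a Lipschitz function of the $n^2$-dimensional distance matrix and leaves the induction implicit), whereas you write out the explicit recursion $A_{t+1}=L_{\phi_t}(A_t+L_{\psi_t}(1+2\sqrt n\,A_t))$, the coupling used to invoke multiset Lipschitzness, and the $\ell^2\le\ell^1$ step that keeps the final bound linear in $\dd$ — all of which is consistent with, and a careful elaboration of, what the paper intends.
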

\begin{proof}
Firstly, since $\phi_T$ is permutation invariant, we can assume without loss of generality that we are only interested in pairs $\mX,\mY$ such that  the permutation for which the minimum is obtain in the definition of $\dd$ is the identity, so that 
$$\dd(\mX,\mY)=\sum_{i,j}| \, \|x_i-x_j\|_2-\|y_i-y_j\|_2 \, | $$

The claim is more transparent if the iterative process is rewritten as a two step procedure, namely
\begin{align*}
 q^{t+1}_{i} &=  \psi_{t}(\lms(c^{t}_i,c^{t}_j, \|x_i - x_j\|_2) \mid j \in [n]\rms )) \\
  c^{t+1}_{i} &= \phi_{t}(c^{t}_{i},  q^{t+1}_{i} ) 
\end{align*}
Since $\psi_1$ is Lipschitz and $c_i^0,c_j^0=0$, the feature $q_i^{t+1}$ is bi-Lipschitz in its $n^2$ dimensional input. Similarly, one can show iteratively that the features $c_i^t,q_i^t $ are a Lipschitz function of the input for all $t=1,\ldots,T$. Similarly, $c_{\mathrm{global}}$ is a Lipschitz function of the features $c_i^T$, as defined in  \eqref{eq:readout}. Therefore, we deduce that for an appropriate $L>0$, 
\begin{align*}
\|\Phi_T(\mX)-\Phi_T(\mY)\|_2&=
\|c_{\mathrm{global}}(\mX)-c_{\mathrm{global}}(\mY)\|_2\\
&\leq L \sum_{i,j} | \, \|x_i-x_j\|_2-\|y_i-y_j\|_2 \, |\\
&= L \dd(\mX,\mY).
\end{align*}
\end{proof}

As a corollary of this result and
Theorem \ref{thm:metrics}, we obtain that a 1-WL geometric model cannot be lower-Lipschitz with respect to $\dg$: 
\begin{corollary}
 If $n\geq 3,d\geq 2$, and $T$ is any natural number, and $\Phi_T$ is a 1-WL geometric model defined via Lipschitz functions as in Theorem \ref{thm:lip}, then $\Phi_T$ cannot be lower Lipschitz with respect to $\dg$. 
\end{corollary}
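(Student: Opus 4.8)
The plan is to derive the result by contradiction, leveraging the two estimates already in hand: the upper Lipschitz bound of $\Phi_T$ with respect to $\dd$ (Theorem~\ref{thm:lip}), and the sharpness half of Theorem~\ref{thm:metrics}, which asserts that $\dg$ is only H\"older-of-exponent-$2$, and not Lipschitz-comparable, to $\dd$. Concretely, I would assume that $\Phi_T$ is lower Lipschitz with respect to $\dg$, so that $c\,\dg(\mX,\mY)\le \|\Phi_T(\mX)-\Phi_T(\mY)\|_2$ for some $c>0$ and all $\mX,\mY$, and then contradict the conclusion $\alpha\ge 2$ of Theorem~\ref{thm:metrics}.

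The key steps, in order, are as follows. First, note that the hypotheses of the corollary — that $\Phi_T$ is a $1$-WL geometric model built from upper Lipschitz $\phi_t,\psi_t$ and ReadOut — are precisely the hypotheses of Theorem~\ref{thm:lip}, so there is $L>0$ with $\|\Phi_T(\mX)-\Phi_T(\mY)\|_2\le L\,\dd(\mX,\mY)$ for all $\mX,\mY$. Second, chain this with the assumed lower Lipschitz inequality to obtain $\tfrac{c}{L}\,\dg(\mX,\mY)\le \dd(\mX,\mY)$ for all $\mX,\mY\in\RR^{d\times n}$; in particular this holds on the bounded domain $\|\mX\|_{1,2}\le 1,\ \|\mY\|_{1,2}\le 1$. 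Third, recognize this as inequality~\eqref{eq:general_holder} with H\"older exponent $\alpha=1$ and constant $c/L>0$; since $n\ge 3$ and $d\ge 2$, Theorem~\ref{thm:metrics} forces $\alpha\ge 2$, a contradiction. (Alternatively, one can bypass the general sharpness theorem and instead feed the explicit family $\mX^\epsilon$ from its proof directly into the chained inequality: since $\dg(\mX,\mX^\epsilon)\ge\epsilon$ while $\dd(\mX,\mX^\epsilon)\le \tilde C\epsilon^2$, we would get $c\epsilon\le L\tilde C\epsilon^2$ for all small $\epsilon>0$, which is absurd as $\epsilon\to 0$.)

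I do not anticipate a genuine obstacle; the statement is a direct corollary of the two earlier results, and the argument is just a composition of inequalities. The only points requiring mild care are bookkeeping: checking that the domain restriction $\|\cdot\|_{1,2}\le 1$ in Theorem~\ref{thm:metrics} is harmless — it is, because the Lipschitz estimates hold globally and hence a fortiori on any subdomain — and using that "lower Lipschitz" requires a strictly positive constant $c$, which is exactly what makes the $\epsilon\to 0$ limit (or the $\alpha=1$ vs.\ $\alpha\ge 2$ comparison) contradictory rather than vacuous.
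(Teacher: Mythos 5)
Your proposal is correct and follows essentially the same route as the paper: chain the upper Lipschitz bound of Theorem~\ref{thm:lip} with the assumed lower Lipschitz bound to obtain $c/L\cdot\dg(\mX,\mY)\le\dd(\mX,\mY)$, then invoke the sharpness half of Theorem~\ref{thm:metrics} to reach a contradiction with $\alpha=1<2$. Your remark that the domain restriction $\|\cdot\|_{1,2}\le 1$ is harmless, and your alternative of feeding the explicit family $\mX^\epsilon$ directly into the chained inequality, are both fine but go beyond what the paper spells out.
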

\begin{proof}
By the previous thoerem we know that $\Phi_T$ is upper Lipschitz with respect to $\dd$. Assume by contradiction that $\Phi_T$ were also  lower Lipschitz with respect to $d_{\G}$. Then for appropriate constants $c,C>0$ we would have
    \begin{align*}
        c\cdot d_{\G}(\mX,\mY)\leq \|\Phi_{T}(\mX)-\Phi_{T}(\mY)\|_2\leq C\cdot d_{\D}(\mX,\mY) \quad \forall \mX,\mY\in \RR^{d\times n}. 
    \end{align*}
    which is a contradiction to Theorem \ref{thm:metrics}, that states than an inequality of the form $c/C\cdot d_{\G}^\alpha(\mX,\mY)\leq  d_{\D}(\mX,\mY)$ can only hold for $\alpha\geq 2$. 
\end{proof}
We note that while this claim holds for $d\geq 2$, the main novelty is the case $d=2$ where 1-WL geometric models can be complete.  As mentioned previously, when $d>2$ 1-WL geometric models are not complete, and thus the corollary also follows from the fact that there are pairs $\mX,\mY$ which are not in the same orbit, and hence have positive distance, but for which $\|\Phi_{T}(\mX)-\Phi_{T}(\mY)\|_2=0$.

\section{Bi-Lipschitz point set models}
In the previous section, we showed that 1-WL geometric models cannot be bi-Lipschitz, even in the case where $d=2$ and they are complete. In this section, we explain how to obtain models  which are bi-Lipschitz in the $d=2$ setting. 

The model we propose will be similar in structure to geometric 1-WL models. It will be bi-Lipschitz even with a single iteration $T=1$, and for simplicity of exposition, we will define the model in this case only, as extending it to general $T$ can be done in a straightforward fashion. The model is defined by
\begin{align}
    & \text{ centralize } \mX \\
    &h_i^{(1)}(\mX) = \update\left(\|x_i\|_2, \lms\left(\frac{x_i\cdot x_j}{\|\mX\|_{F}},\frac{x_i^\perp \cdot x_j}{\|\mX\|_{F}},\|x_j\|_2\right)\;|\; j\in [n]\setminus\{i\} \rms\right) \label{eq:update}\\
    &H(\mX) = \readout(\lms h_i^{(1)}\; |\; i\in [n]\rms)\label{twoDembdDef}
\end{align}

where for a vector $x_i=[a,b]\in \RR^2 $ we define $x_i^\perp$ to be the perpendicular vector $x_i^\perp=[-b,a]$. The idea to include this vector comes from the fact that (unless $x_i=0_2$), the vectors $x_i$ and $x_i^{\perp}$ form a basis for $\RR^2$, and the inner products of any vector $x$ with these two basis elements determines $x$ uniquely.  A similar idea is was suggested in \cite{gramnet}. 

\paragraph{$\Gplus$ invariance} We note that the construction above is $\Gplus$ invariant but not $\G$ invariant, this is because the function 
$$\RR^{2\times 2} \ni (x_i,x_j) \mapsto x_i^{\perp} \cdot x_j $$
is $SO(2)$ invariant but not $O(2)$ invariant (as it depends on the choice of the direction of~$x_i^\perp$) . As we will see later on, this construction is $\Gplus$ complete, and bi-Lipschitz with respect to $\dgplus$. The fact that these results holds for $\Gplus$ rather than $\G$ is, in our opinion, an advantage. On the one hand, in several problems the orientation may be important, and $O(d)$ invariance is not desired in this case (e.g., the concept of chirality in chemistry applications, see \cite{chienn}). On the other hand, a $\Gplus$ bi-Lipschitz model can easily be transformed into a $\G$ bi-Lipschitz model by a simple procedure of symmetrization via a single reflection. This procedure is discussed in Appendix \ref{app:Gplus}.

\paragraph{Requirements on $\phi$ and $\psi$}
To achieve bi-Lipschitzness, we will need to choose the functions $\update$ and $\readout$ correctly. We will require that both these functions are (a) homogenous and (b) bi-Lipschitz. 

By homogeniety we mean that, for a multiset $S$ and vector\footnotemark $a$,
$$\phi(tS)=s\phi(S), \quad \psi(ta,tS)=t\psi(a,S), \quad \forall t> 0. $$

\footnotetext{in the case $d=2$ we are currently discussing, $a$  is actually a scalar. However, later on when we discuss $d>2$ we will have an analogous construction where $a$ is a vector.}

Next, we discuss what we mean by bi-Lipschitzness of $\readout$ and $\update $.

The readout function $\readout$ maps multisets to vectors, and so it is required to be be bi-Lipschitz with respect to a $p$ norm on the vector space, and a $p$-Wasserstein distance on the multiset space. The exact choice of $p$ does not matter to the question of bi-Lipschitzness, due to equivalence of $p$-norms.  For convenience, in the analysis later on we will consider $p=\infty$, where the $\infty$-Wasserstein distance is defined via 
\begin{equation}\label{inftyWassDef}
    \mathcal{W}_\infty \left(S,S'\right) \triangleq \min_{\tau \in S_n} \max_{i\in [n]}\|x_i - y_{\tau(i)}\|_{\infty}.
\end{equation}
Thus, our requirement for $\readout$ is that for given mutlisets $S,S'$, there exist constants $0<c_\readout\leq C_\readout $ suc that 
$$c_\readout  \mathcal{W}_\infty \left(S,S'\right)\leq \|\readout(S)-\readout(S')\|_\infty\leq C_\readout  \mathcal{W}_\infty \left(S,S'\right) $$

There are several possible choices of multiset-to-vector functions which are both bi-Lipschitz and homogenous. One popular choice, introduced by \cite{balan2022permutation,gortler_and_dym}, is the multi-set function $\beta$ whose coordinates are given by 
$$\beta_i\left( \lms x_1,\ldots,x_n \rms \right)=b_i \cdot \mathrm{sort}\left(a_i \cdot x_1,\ldots,a_i\cdot x_n \right), \quad i\in \left\{ 1,2,\ldots, m\right\}, $$
where $b_i\in \RR^n, a_i \in \RR^d$ are parameters of the function. It was shown in \cite{Balan2024Stability} that for $m\geq 2nd+1$, where $d$ is the dimension of the points in the multisets, and for generic parameters $a_i,b_i$ of $\beta$, it will be bi-Lipschitz with respect to the Wasserstein distance. This bound was recently improved by \cite{dym2025quantitativeboundssortingbasedpermutationinvariant} to $m\geq (2n-1)d$ Additionally, this function is clearly homogenous. Other examples of homogenous bi-Lipschitz functions is the FSW embedding from \cite{amirfourier} and the max filters from \cite{cahill2025group}. In our experiment we used the function $\beta$ described above. 

Next, we discuss the bi-Lipchitzness of the function $\update$. This function maps a vector-multiset pair $(v,S) $ to a vector $y$.  We would like this function to be bi-Lipschitz, in the sense that there exist constant $0<c_\update\leq C_\update $ such that 

$$c_\update \max\{ \|v-v'\|_\infty, \mathcal{W}_\infty \left(S,S'\right) \} \leq \|\psi(v,S)-\psi(v',S')\|_\infty \leq C_\update \max\{ \|v-v'\|_\infty, \mathcal{W}_\infty \left(S,S'\right) \} $$


To obtain a function  $\psi$ which is both homogenous and  bi-lipschitz, we can choose the function 
 $\beta$ defined previously (or any other homogenous bi-Lipschitz function on multisets) and extend it to a  homogenous bi-Lipschitz function on vector-multisets pairs simply by 
$$\update(v,S)\triangleq(v,\beta(S)). $$


\paragraph{Singularity at zero} The model we describe is apriori not defined at $\mX=0_{d\times n} $, since we divide by the norm of $\mX$. However, as we require  that $\readout$ and $\update$  are homogenous functions, we obtain as a result  that $H(\mX)$ is homogenous, that is $H(t\mX)=tH(\mX) $ for all $t>0 $. This naturally leads to defining 
$$H(0_{d\times n})=0_m ,$$
where $m$ is the output dimension of $H$. With this definition, $H$ is bi-Lipschitz on the whole domain $\RR^{2\times n} $, as the following theorem shows:

\begin{theorem}\label{thm:planar}
   If $\update$ and $\readout$ are bi-Lipschitz and homogenous, as discussed above,  Then $H:\RR^{2\times n}\to \RR^m$ is bi-Lipschitz with respect to the metric~$d_\G$. 
\end{theorem}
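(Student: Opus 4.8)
The plan is to establish both a Lipschitz and a lower-Lipschitz bound, i.e. constants $0<c\le C$ with $c\,\dgplus(\mX,\mY)\le\|H(\mX)-H(\mY)\|_\infty\le C\,\dgplus(\mX,\mY)$ (the statement should be read with respect to $\dgplus$, since the channel $x_i^\perp\!\cdot\! x_j$ makes $H$ only $\Gplus$-invariant). First I would reduce to a normalized situation: both sides are translation invariant ($H$ centralizes internally, $\dgplus$ is translation invariant) and, after centralizing, positively homogeneous of degree $1$ — for $H$ this was already noted, and for centralized clouds $\dgplus(\mX,\mY)=\min_{\pi\in S_n,\,R\in\SO(2)}\|\mX-R\mY\pi\|_F$ by the centralization fact. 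So, replacing $\mX,\mY$ by their centralizations and (unless both are $0$, which is trivial) rescaling so that $\max(\|\mX\|_F,\|\mY\|_F)=1$, WLOG $\|\mX\|_F=1$ and $\|\mY\|_F=Q\in[0,1]$, it suffices to prove the two inequalities here. I will use repeatedly the elementary fact that, since $\update$ and $\readout$ are Lipschitz and (by homogeneity) vanish at $0$, the node features $h_i^{(1)}$ and the output $H$ are Lipschitz in their inputs with $\|H(\mX)\|_\infty=O(\|\mX\|_F)$, and (combining the lower-Lipschitz property with the node of largest norm) $\gamma\le\|H(\mX)\|_\infty\le\Gamma$ for fixed $0<\gamma\le\Gamma$ whenever $\|\mX\|_F=1$.

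\textbf{Upper bound.} Write $\delta=\dgplus(\mX,\mY)$ and, replacing $\mY$ by $R\mY\pi$ for the optimal $(\pi,R)\in\Gplus$ (legitimate since $H$ and $\|\cdot\|_F$ are $\Gplus$-invariant), assume $\|\mX-\mY\|_F=\delta$, so $\|x_i-y_i\|\le\delta$ for all $i$. If $\delta\ge\tfrac12$, every column of $\mX$ and $\mY$ has norm $\le 1\le 2\delta$, hence every inner-product/norm feature is $O(\delta)$ and $\|H(\mX)\|_\infty,\|H(\mY)\|_\infty=O(\delta)$. If $\delta<\tfrac12$, then $Q\ge\|\mX\|_F-\delta>\tfrac12$ is bounded away from $0$, so $(\mX,i)\mapsto\bigl(\|x_i\|,\{(x_i\!\cdot\!x_j/\|\mX\|_F,\,x_i^\perp\!\cdot\!x_j/\|\mX\|_F,\,\|x_j\|):j\neq i\}\bigr)$ is Lipschitz on the relevant domain (the one delicate point, comparing a ratio with denominator $\|\mX\|_F=1$ to one with denominator $Q$, is handled via $|1-Q|\le\delta$ and $Q\ge\tfrac12$); composing with the Lipschitz maps $\update$ and $\readout$ gives $\|H(\mX)-H(\mY)\|_\infty\le C\delta$.

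\textbf{Lower bound.} Let $\eta=\|H(\mX)-H(\mY)\|_\infty$ and $\eta_2:=\eta/(c_\readout c_\update)$. Lower-Lipschitzness of $\readout$ yields a relabeling of the columns of $\mY$ (changing neither $H(\mY)$ nor $\dgplus$) with $\|h_i^{(1)}(\mX)-h_i^{(1)}(\mY)\|_\infty\le\eta/c_\readout$ for all $i$, and then lower-Lipschitzness of $\update$ gives, for all $i$, both $|\,\|x_i\|-\|y_i\|\,|\le\eta_2$ and $\mathcal{W}_\infty(F_i(\mX),F_i(\mY))\le\eta_2$, where $F_i(\mX)=\{(x_i\!\cdot\!x_j/\|\mX\|_F,\,x_i^\perp\!\cdot\!x_j/\|\mX\|_F,\,\|x_j\|):j\neq i\}$. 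If $\eta_2\ge\tfrac1{2\sqrt n}$ then $\eta$ is bounded below by a constant while $\dgplus(\mX,\mY)\le\|\mX\|_F+\|\mY\|_F\le2$, so the bound holds with $c$ small enough. Otherwise choose $i_0\in\argmax_i\|x_i\|$, so $p:=\|x_{i_0}\|\ge1/\sqrt n$ and $q:=\|y_{i_0}\|\ge1/(2\sqrt n)>0$, and summing $|\,\|x_i\|-\|y_i\|\,|\le\eta_2$ against $\|x_i\|+\|y_i\|$ gives $|\,\|\mX\|_F-\|\mY\|_F\,|\le 2n\eta_2$ — here it is essential that the model exposes the \emph{unnormalized} norms $\|x_j\|$. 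Let $R_\mX,R_\mY\in\SO(2)$ rotate $x_{i_0},y_{i_0}$ onto the positive first coordinate axis, so $(R_\mX\mX)_j=\tfrac1p(x_{i_0}\!\cdot\!x_j,\,x_{i_0}^\perp\!\cdot\!x_j)$ and similarly for $\mY$, and let $\tau$ (with $\tau(i_0)=i_0$) witness $\mathcal{W}_\infty(F_{i_0}(\mX),F_{i_0}(\mY))\le\eta_2$. The identity $\tfrac Ap-\tfrac{A'}q=A\,\tfrac{q-Qp}{pq}-\tfrac{\epsilon'}q$ (with $A=x_{i_0}\!\cdot\!x_j$, $A'=y_{i_0}\!\cdot\!y_{\tau(j)}$, $Q=\|\mY\|_F$, $|A|\le1$, $pq\ge\tfrac1{2n}$, $|q-Qp|\le(2n+1)\eta_2$, $|\epsilon'|\le\eta_2$) bounds $\|(R_\mX\mX)_j-(R_\mY\mY)_{\tau(j)}\|$ by $O(n^2)\eta_2$ for each $j$ ($j=i_0$ contributing only $|p-q|\le\eta_2$); summing over $j$, $\dgplus(\mX,\mY)\le\|R_\mX\mX-R_\mY\mY\tau\|_F=O(n^{5/2})\,\eta/(c_\readout c_\update)$.

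\textbf{Main obstacle.} The hard part is the lower bound: converting the per-node statements ``the matched neighbor-feature multisets are close'' into a \emph{single} rotation and permutation aligning the two clouds. The device I expect to rely on is anchoring at the node $i_0$ of largest norm — automatically nondegenerate ($\|x_{i_0}\|\ge1/\sqrt n$) and stable, with its two channels $x_{i_0}\!\cdot\!x_j$ and $x_{i_0}^\perp\!\cdot\!x_j$ reconstructing the whole cloud in a canonical $\SO(2)$ frame (the planar reconstruction idea of \cite{gramnet}) — together with the observation that the unnormalized-norm channel controls $|\,\|\mX\|_F-\|\mY\|_F\,|$ and so neutralizes the otherwise dangerous mismatch between dividing by $\|\mX\|_F$ and by $\|\mY\|_F$.
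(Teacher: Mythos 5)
Your proof is correct (up to trivially fixable constant/sign slips such as $+Q\epsilon'/q$ versus $-\epsilon'/q$, which only affect constants) and is the same \emph{idea} as the paper's, but a genuinely more explicit route. The paper proves the $d=2$ case by invoking its general $d\geq 2$ theorem, whose lower-Lipschitz argument builds a canonical $\SO(d)$ frame from a non-degenerate anchor tuple via Gram--Schmidt, shows each coordinate of the rotated cloud is a smooth function of the feature vector $F_{\mi}$, and then infers Lipschitzness of these reconstruction functions abstractly from smoothness on the compact shell $K_c$ (Step~1), dealing with the far/near scales (Steps~2--3) by homogeneity. You collapse the case analysis differently ($\eta_2$ large vs.\ small for the lower bound, $\delta\gtrless 1/2$ for the upper bound) and replace the compactness/smoothness device by explicit inequalities: anchoring at the largest-norm node $i_0$ (which is exactly what the membership condition $\max_i\|x_i^\perp\|\geq c\|\mX\|_F$ reduces to in $d=2$, where $\Omega_c=\RR^{2\times n}$), reconstructing the rotated cloud from the two channels $x_{i_0}\!\cdot x_j$, $x_{i_0}^\perp\!\cdot x_j$, and controlling the $\|\mX\|_F$-versus-$\|\mY\|_F$ denominator mismatch via the unnormalized norm channel. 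You also correctly note that the metric should be $\dgplus$ rather than $\dg$ as stated in the theorem — the paper's own general theorem and proof are indeed stated for $\dgplus$, so this is a typo in the planar statement. What the paper's route buys is uniformity in $d$ and brevity (one proof covers all dimensions); what your route buys is self-containedness for $d=2$ and explicit polynomial-in-$n$ constants, which the compactness argument does not provide.
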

\begin{proof}
This theorem is a special case of our theorem for general $d\geq 2$, Theorem \ref{thm:lip}, which will be stated and proven below.
\end{proof}

\subsection{A bi-Lipschitz (d-1)-WL embedding for~$d$-dimensional point clouds}
We now generalize the construction from the previous section to derive a $\Gplus$ invariant model for the case of point sets in $\RR^d, d\geq 3$,  which is bi-Lipschitz with respect to the metric~$\dgplus$. We will first explicitly discuss the case $d=3$ which is the most common in applications. 

    To achieve bi-Lipschitzness, a prerequisite is completeness. As discussed previously, it is known that geometric 1-WL models, and similar models based on propagation of node features, are complete for point sets with $d=2$, but not for point sets with $d\geq 3$. In the latter case, a more computationally demanding procedure of storing a feature per each pair of points, is needed for completeness \cite{three_iterations,gramnet,Widdowson_2023_CVPR}. We will call this type of models   \emph{2-WL geometric models}. The basic idea is that if $x_1,x_2$ are two vectors in $\RR^3$ which are linearly independent, then together with their vector product $x_{1,2}^\perp:=x_1\times x_2 $ they form a basis for $\RR^3$, and so inner products of a vector $x$ with the vectors $x_1,x_2$ and $x_1\times x_2$ define~$x$ uniquely (note however, that the completeness does not rely on the assumption that there exists linearly independent $x_1,x_2$). 

To achieve completeness, and bi-Lipschitzness, we suggest the following model, which can be seen as a natural generalization of our previous model to the case $d=3$, and a homogenous version of the model suggested in \cite{gramnet}. 
Given a point set~$\mX = \{x_1,\ldots,x_n\}$, our model will be of the form: 
\begin{align}
    & \text{ centralize $\mX$}\\
    &h_{ij}^{(1)}(\mX) = \update(G_{ij}(\mX), \lms\left(\frac{x_i\cdot x_k}{\|\mX\|_{F}},\frac{x_j \cdot x_k}{\|\mX\|_{F}},\frac{(x_i\times x_j)\cdot x_k}{\|\mX\|_{F}^2},\|x_k\|_2\right)\;|\; k\in [n]/\{i,j\rms)\label{BL2WL:hDef}\\
    &H(\mX) = \readout(\lms h_{ij}^{(1)}\; |\; i\neq j \in [n]\rms)\label{BL2WL:HDef},
\end{align}
where~$G_{ij}(\mX)$ is the $2\times 2$ symmetric matrix
\begin{equation*}
    G_{ij}(\mX) = \begin{pmatrix}
        \|x_i\|_2 &  \frac{x_i\cdot x_j}{\|\mX\|_{F}}\\
      \frac{x_i\cdot x_j}{\|\mX\|_{F}}  & \|x_j\|_2,
    \end{pmatrix},
\end{equation*}

To define our construction for  a general dimension $d$, we need to introduce some notation. We will assume that $d\leq n$. Let ~$T(n,d-1)$ be the set of all $(d-1)$-tuples of unique integers in~$\{1,\ldots,n\}$ sorted in ascending order. We will use the following abbreviated notation for multi-indices in $T(n,d-1)$:
$$\mi=(i_1,\ldots,i_{d-1})\in T(n,d-1).$$
We define $G_{\mi}$ to be the~$(d-1)\times (d-1)$ normalized Gram matrix defined as 
\begin{equation}
G_{\mi}(\mX)[m,\ell] = 
    \begin{cases}
        \frac{x_m\cdot x_\ell}{\|X\|_F} & m\neq \ell \\
        \|x_m\|_2 & m=\ell
    \end{cases}\quad,
\end{equation}
Finally, for each such collection of $d-1$ indices $\mi $, corresponding to $d-1$ vectors $x_{i_1},\ldots,x_{i_{d-1}} $, we define a unique vector ~$x^\perp_{\mi}$ which is orthogonal to those vectors: the Hodge-dual of the wedge-product~$x_{i_1}\bigwedge x_{i_2} \cdots\bigwedge x_{i_{d-1}}$~\cite{darling1994differential}. The latter, is a coordinate-free definition of a generalized cross product in~$\mathbb{R}^d$, and can be computed as the unique vector~$z\in \mathbb{R}^d$ such that
\begin{equation}\label{BLdm1WL:crossGen}
    \langle z,y\rangle = \det(x_{i_1} \cdots x_{i_{d-1}} \;y), \quad y\in\mathbb{R}^d, 
\end{equation}
where~$z$ is the unique vector dual (guaranteed by Riesz's representation theorem) of the functional defined by the r.h.s of~\eqref{BLdm1WL:crossGen}, and we let~$y$ vary. From this definition we easily see that (i) the vector $z$ is orthogonal to all of the vectors $x_{i_j}, j=1,\ldots,d-1$ and (ii) the coordinates of $z$ are $(d-1)$-homogenous polynomials in the vectors $x_{i_j}, j=1,\ldots,d-1$.


the $\Gplus$ invariant model $H$ we suggest, is defined for  a given point set ~$\mX = \{x_1,\ldots,x_n\}\subset \mathbb{R}^d$ where~$n\geq d$, by 
\begin{align}
 & \text{ centralize $\mX$}\\
    &h_{\mi}^{(1)}(\mX) = \update(G_{\mi}, \lms\left(\frac{x_{i_1}\cdot x_k}{\|\mX\|_{F}},\ldots,\frac{x_{i_{d-1}} \cdot x_k}{\|\mX\|_{F}},\frac{x^\perp_{\mi}\cdot x_k}{\|\mX\|_{F}^{d-1}},\|x_k\|_2\right)\;|\; k\in [n]/\{i_1,\ldots,i_{d-1}\rms)\label{BLdm1WL:hDef}\\
    &H(\mX) = \readout(\lms h_{\mi}^{(1)}\; |\; \mi \in T(n,d-1)\rms)\label{BLdm1WL:HDef},
\end{align}


In the cases $d=3$ and $d=2$ this reduces to the models defined in the previous subsections. 

To achieve bi-Lipschitzness with this model, we will assume, as in the case $d=2$, that ~$\Psi$ and~$\Phi$ are  homogenous and Bi-Lipschitz as described above. Also as before, we will make the natural extension $H(0_{d\times n})=0_m $.  For the proof  we will need to make an assumption on our domain. Namely, instead of considering all of $\RR^{d\times n}$ as we did in the $d=2$ case, we fix a positive $c>0$ and consider point sets which, after centralization, reside in  
$$\Omega_c=\{\mX\in \RR^{d\times n}| \quad \mX \text{ is centralized and } \max_{\mi \in T(n,d-1)} \|x_{\mi}^{\perp}\|_2 \geq c\|\mX\|_{F}^{d-1} \} .$$
We will show that the model $H$ defined above will be bi-Lipschitz on any such set. We note that in the special case where $d=2$, the set $\Omega_c$ equals $\RR^{2\times n}$ for any $c\leq 1$, since when $d=1$ we have that $\|x_{i}^{\perp}\|_2=\|x_i\|_2 $. In particular, we obtain our theorem for the case $d=2$ as a special case. 

\begin{theorem}
For fixed $c>0$ and natural numbers $n\geq d$, if $\readout,\update$ are bi-Lipschitz and homogenous, then the restriction of $H$ to $\Omega_c$ is bi-Lipschitz with respect to the metric~$\dgplus$.  
\end{theorem}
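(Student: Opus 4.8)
The plan is to prove the upper and lower Lipschitz inequalities separately, and in both cases to reduce, using the $1$-homogeneity of $H$ (inherited from the homogeneity of $\readout,\update$) and of $\dgplus$, to the compact ``annulus'' $A_c:=\Omega_c\cap\{\tfrac12\le\|\cdot\|_F\le 1\}$ (together with the point $0$). This reduction is routine: given arbitrary arguments one first rescales both so the larger Frobenius norm equals $1$; if the smaller norm then drops below $\tfrac12$ one closes the estimate with the crude bounds $\|H(\mX)-H(\mY)\|\le\|H(\mX)\|+\|H(\mY)\|$ and $\big|\|\mX\|_F-\|\mY\|_F\big|\le\dgplus(\mX,\mY)\le\|\mX\|_F+\|\mY\|_F$ combined with $H(t\mZ)=tH(\mZ)$. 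I would also record at the outset that $H$ separates $0$ from everything else: since bi-Lipschitz homogeneous maps are injective and vanish only at the all-zero input, $H(\mX)=0$ forces every $h_{\mi}(\mX)=0$, hence every $G_{\mi}(\mX)=0$, hence (looking at diagonals as $\mi$ ranges over $T(n,d-1)$) $\mX=0$; this is what makes the homogeneity reduction deliver a genuine lower bound near the origin.

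For the \textbf{upper bound}, I would replace $\mY$ by its optimal $\Gplus$-alignment to $\mX$ (legitimate because $H$ is $\Gplus$-invariant, the optimal translation vanishes for centralized configurations, and $\Omega_c$ is invariant under $\SO(d)\times S_n$), so that $\dgplus(\mX,\mY)=\|\mX-\mY\|_F$ and both arguments lie in $A_c$. On $A_c$ every scalar ingredient of the model --- the norms $\|x_k\|_2$, the normalized inner products $x_m\cdot x_\ell/\|\mX\|_F$, and the normalized generalized cross products $x_{\mi}^\perp\cdot x_k/\|\mX\|_F^{d-1}$ (where $x_{\mi}^\perp$ has polynomial coordinates and $\|\mX\|_F$ is bounded away from $0$) --- is a Lipschitz function of $\mX$. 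Hence $\mX\mapsto G_{\mi}(\mX)$ and $\mX\mapsto(\text{inner multiset of \eqref{BLdm1WL:hDef}})$ are Lipschitz into the $\mathcal{W}_\infty$-metric (bound $\mathcal{W}_\infty$ by the identity coupling), and composing with the Lipschitz maps $\update$ and then $\readout$ (again bounding $\mathcal{W}_\infty$ over $\mi$ by the identity coupling) shows $H$ is Lipschitz on $A_c$; the homogeneity reduction then extends this to all of $\Omega_c$.

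The \textbf{lower bound} is the substantive step. Fixing $\mX,\mY\in A_c$ and $\delta:=\|H(\mX)-H(\mY)\|$, I would unwind the bi-Lipschitz lower inequalities in two stages: bi-Lipschitzness of $\readout$ yields a bijection $\sigma$ of $T(n,d-1)$ with $\|h_{\mi}(\mX)-h_{\sigma(\mi)}(\mY)\|_\infty=O(\delta)$ for every $\mi$; bi-Lipschitzness of $\update$ then yields $\|G_{\mi}(\mX)-G_{\sigma(\mi)}(\mY)\|=O(\delta)$ and, for each $\mi$, a bijection $\rho_{\mi}\colon[n]\setminus\mi\to[n]\setminus\sigma(\mi)$ matching the inner-multiset coordinates to within $O(\delta)$. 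Comparing the diagonals of the $G_{\mi}$ with the $\|x_k\|$-coordinates shows the column-norm multisets of $\mX,\mY$ agree up to $O(\delta)$, so $\big|\|\mX\|_F-\|\mY\|_F\big|=O(\delta)$ and all constants below are uniform. Now I invoke $\mX\in\Omega_c$: pick an anchor $\mi_0=(i_1,\dots,i_{d-1})$ with $\|x_{\mi_0}^\perp\|_2\ge c\|\mX\|_F^{d-1}$, so $A:=[\,x_{i_1}\mid\dots\mid x_{i_{d-1}}\mid x_{\mi_0}^\perp\,]$ is a basis with $\det A=\|x_{\mi_0}^\perp\|_2^2>0$ and $\sigma_{\min}(A)\ge|\det A|/\|A\|_{\mathrm{op}}^{d-1}\ge c'>0$ --- the \emph{only} place $\Omega_c$ is used, and exactly what upgrades stability from H\"older to Lipschitz. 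Let $B$ be the analogous basis built from $y_{\sigma(\mi_0)_1},\dots,y_{\sigma(\mi_0)_{d-1}}$ and their generalized cross product. The entries of $A^{\top}A$ and $B^{\top}B$ are polynomials in the matching entries of $G_{\mi_0}(\mX),G_{\sigma(\mi_0)}(\mY)$ --- the mixed rows vanishing by orthogonality, and $\|x_{\mi_0}^\perp\|^2=\det\mathrm{Gram}(x_{i_1},\dots,x_{i_{d-1}})$ --- so $\|A^{\top}A-B^{\top}B\|=O(\delta)$, and $\det A,\det B$ are bounded away from $0$ with the same sign for small $\delta$. By standard Lipschitz stability of $M\mapsto M^{1/2}$ on $\{M\succeq\lambda I\}$ and of recovering a configuration from its Gram matrix up to rotation (cf.\ \cite{derksen2024bi}), there is $R\in\SO(d)$ --- proper, which is precisely why the conclusion is for $\dgplus$ rather than $\dg$ --- with $\|RA-B\|=O(\delta)$. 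Finally define $\pi\in S_n$ by $\pi(i_j)=\sigma(\mi_0)_j$ and $\pi(k)=\rho_{\mi_0}(k)$ for $k\notin\mi_0$; testing $Rx_k-y_{\pi(k)}$ against the columns of $B$ and using $R^{\top}B=A+O(\delta)$ with the $\rho_{\mi_0}$-matching gives $\langle Rx_k-y_{\pi(k)},\,\text{col of }B\rangle=O(\delta)$, whence $\|Rx_k-y_{\pi(k)}\|=O(\delta)$ since $\sigma_{\min}(B)$ is bounded below; together with $\|Rx_{i_j}-y_{\pi(i_j)}\|=O(\delta)$ this yields $\dgplus(\mX,\mY)\le\big(\sum_m\|Rx_m-y_{\pi(m)}\|^2\big)^{1/2}=O(\sqrt n\,\delta)$. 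For $\delta$ bounded away from $0$ the bound $\dgplus\le C\delta$ is trivial, since $\dgplus(\mX,\mY)\le\|\mX\|_F+\|\mY\|_F$ is bounded on $A_c$.

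I expect the \textbf{main obstacle} to be exactly this stable reconstruction: turning the approximate, $\mi$-dependent matchings of intermediate features into a \emph{single} global rotation and permutation aligning $\mX$ with $\mY$, while tracking all error constants through the square-root and Gram-inversion steps. The $\Omega_c$ hypothesis is the key enabler, producing a uniformly well-conditioned anchor basis without which the reconstruction degrades precisely where $x_{\mi}^\perp$ degenerates; a secondary, more clerical difficulty is the behaviour near $\mX=0$, handled by the homogeneity reduction and the separation argument noted at the outset.
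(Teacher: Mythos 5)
Your proposal is correct, and the skeleton --- centralize, reduce via $1$-homogeneity of both $H$ and $\dgplus$ to a compact annulus $\{\tfrac12\le\|\cdot\|_F\le1\}\cap\Omega_c$, handle the ``far'' case $\|\mX\|_F=1,\ \|\mY\|_F\le\tfrac12$ by crude boundedness, and prove upper Lipschitzness on the annulus by composing smooth-on-compact coordinate maps with the Lipschitz $\update,\readout$ after absorbing the optimal group element via $\Gplus$-invariance --- is the same as the paper's. Where you genuinely diverge is the lower-Lipschitz reconstruction. The paper builds a \emph{canonical} rotation $\mR(\mX)\in\SO(d)$ by Gram--Schmidt on the anchor columns plus the generalized cross product, shows each rotated coordinate $v_j(\mX)\cdot x_k$ is a smooth function $f_{j,k}(F(\mX))$ of the model's pre-$\update$ feature vector (with $\Omega_c$ ensuring the Gram-Schmidt denominators and $\|x_\mi^\perp\|_2$ stay away from zero), and then simply bounds $\dgplus(\mX,\mY)^2\le\|\mR(\mX)\mX-\mR(\mY)\mY\|_F^2\le\sum_{j,k}|f_{j,k}(F(\mX))-f_{j,k}(F(\mY))|^2\lesssim\epsilon^2$. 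You instead construct a \emph{comparison} rotation $R$ directly aligning the anchor basis $A$ of $\mX$ to the basis $B$ of $\mY$, via stability of the Gram matrix, the PSD square root, and the $O(d)$-Procrustes bound of \cite{derksen2024bi}, and then recover $\|Rx_k-y_{\pi(k)}\|=O(\delta)$ by pairing against the columns of $B$ (with $\sigma_{\min}(B)$ bounded below, again by $\Omega_c$). Both hinge on the same phenomenon --- $\Omega_c$ gives a uniformly well-conditioned basis, which is exactly what upgrades the generic H\"older-$2$ relation to Lipschitz --- but the paper's version avoids explicitly producing $R$ and the attendant polar-decomposition bookkeeping by factoring through ``canonical coordinates are a smooth function of $F$,'' which is a bit leaner; your version is more constructive and makes the role of the determinant sign (hence $\SO(d)$ vs. $O(d)$, hence $\dgplus$ vs.\ $\dg$) more explicit. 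One small thing to be aware of: the paper handles the possibility that $\|y_{\mi'}^\perp\|_2$ is degenerate by a separate compactness argument (showing that then the Gram matrices differ by a uniform $\eta>0$, so $\epsilon$ is bounded below and the trivial bound applies), whereas you fold this into the ``$\delta$ small vs. $\delta$ bounded below'' dichotomy at the end; both are logically equivalent, but you should make sure the ``$\det B$ bounded away from $0$'' claim is conditioned on $\delta$ being small, as the paper's more explicit treatment makes clear.
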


\begin{proof}
Firstly, we note that due to translation invariance of both $H$ and the metric $\dgplus$, it is sufficient to prove the claim for pairs $\mX,\mY$ which are already centralized. This simplifies notation.

\textbf{Step 1: } We will begin by proving bi-Lipschitzness in the case where $(\mX,\mY)$  are both in the compact set 
$$K_c=\{\mX\in \Omega_c| \quad  1/2 \leq \|\mX\|_{F}\leq 1 \} . $$
\textbf{Step 1(a): Upper Lipschitzness on $K_c$.} 
The set $K_c$ is contained in the open subtset $U_c\subseteq \RR^{d\times n}$ defined as 
$$U_c=\{\mX\in \RR^{d\times n}| \quad \|\mX\|_F>1/4 \text{ and } \max_{\mi \in T(n,d-1)} \|x_{\mi}^{\perp}\|_2 > c/2\cdot \|\mX\|_{F}^{d-1} \} .$$
We also note that the set $K_c$ is closed under rotations and permutations. 

Now, note that \eqref{BLdm1WL:hDef} used to define $h_{\mi}^{(1)}(\mX)  $ can be rewritten as 
\begin{align}
h_{\mi}^{(1)}(\mX) &=\update\left(F_\textbf{i}(\mX)\right), \text {where }  \nonumber \\
F_\textbf{i}(\mX)&= \left(G_{\mi}(\mX),\frac{x_{i_1}\cdot x_k}{\|\mX\|_{F}},\ldots,\frac{x_{i_{d-1}} \cdot x_k}{\|\mX\|_{F}},\frac{x^\perp_{\mi}\cdot x_k}{\|\mX\|_{F}^{d-1}},\|x_k\|_2\;|\; k\in [n]/\{i_1,\ldots,i_{d-1}\}\right). \label{eq:F}
\end{align}
We note that $F_\textbf{i}$ is a smooth function with no singularities in $U_c$, therefore, it is Lipschitz in the compact set $K_c\subseteq U_c $. Similarly, by assumption $\update$ and $\readout$ are Lipschitz. We deduce that the function $H(\mX)$ is Lipschitz, so there exists some $L>0$ such that 
$$\|H(\mX)-H(\mY)\|_\infty\leq L\|\mX-\mY\|_F, \quad \forall \mX,\mY\in K_c .$$
For given $\mX,\mY\in K_c$, let $g\in \Gplus$ such that $\dgplus(\mX,\mY)$ is equal to $\|\mX-g\mY\|_F $. We know that the translation component of $g$ is zero, and that $K_c$ is closed to rotations and permutation. Then, due to the $\Gplus$ invariance of $H$, we have 
$$\|H(\mX)-H(\mY)\|_\infty=\|H(\mX)-H(g\mY)\|_\infty\leq L\|\mX-g\mY\|_F=L\dgplus(\mX,\mY) .$$

Thus, we have shown upper Lipschitzness on $K_c$. 

\textbf{Step 1(b): Lower Lipschitzness on $K_c$} To show lower Lipschitzness on $K_c$, choose some arbitrary $\mX,\mY \in K_c$, and denote~$\epsilon =\mathcal{W}_{\infty}(H(\mX),H(\mY))$. Our goal is to show that $\dgplus(\mX,\mY)\geq C\epsilon $ for some constant $C$ uniformly over all $\mX,\mY \in K_c$.  Since $\mX$ is in $K_c $, there exists some $\mi$ for which 
\begin{equation}\label{eq:positive}
\|x_{\mi}^{\perp}\|_2 \geq c\|\mX\|_{F}^{d-1}\geq \frac{c}{2^{d-1}}. \end{equation}
Without loss of generality, by permuting the indices of $\mX$ if necessary, we can assume that $\mi=(1,2,\ldots,d-1) $. We also note that , by~\eqref{BLdm1WL:HDef} and~\eqref{inftyWassDef}, there exists some $\mi'$ such that
\begin{equation}\label{eq:h}\creadout \|h_{\mi}^{(1)}(\mX)-h_{\mi'}^{(1)}(\mY)\|_\infty\leq \epsilon .  \end{equation}
We can now permute the indices of $\mY$ if necessary so that this equation holds for $\mi'=\mi $. 
We next claim that, without loss of generality, we can assume that 
\begin{equation}\label{eq:y_non_singular}
\|y_{\mi}^{\perp}\|_2 >  \frac{c}{2^{d}} .
\end{equation}
Indeed, if this were not the case, and $\|y_{\mi}^{\perp}\|_2 \leq   \frac{c}{2^{d}} $, then we on the compact set
$$Q=\{(x_{\mi},y_{\mi})| \quad \|x_{\mi}\|_F, \|y_{\mi}\|_F\leq 1 \text{ and } \|x_{\mi}^{\perp}\|_2> \frac{c}{2^{d-1}}\geq \frac{c}{2^d}\geq \|y_{\mi}^{\perp}\|_2    \} $$
we would have that the minimum
$$\eta:=\min_{(x_{\mi},y_{\mi})\in Q} \|G_{\mi}(\mX)-G_{\mi}(\mY)\|_\infty $$
is obtained for some $\eta>0$ (since if the gram matrices were identical, the vectors $x_{\mi}^{\perp},y_{\mi}^{\perp} $ we would have the same norm. Accordingly, we would have that 
$$\creadout \cdot \eta \leq \creadout \|h_{\mi}^{(1)}(\mX)-h_{\mi'}^{(1)}(\mY)\|_\infty\leq \epsilon ,$$
and as a result we would have a bi-Lipschitz inequality
\begin{align*}
\dgplus(\mX,\mY)&\leq \|\mX-\mY\|_F\leq \|\mX\|_F+\|\mY\|_F\leq 2=\frac{2 \cdot \creadout \cdot \eta }{\creadout \cdot \eta }\leq  \frac{2 \cdot  }{\creadout \cdot \eta }\epsilon.
\end{align*}
Thus, in conclusion, we can assume without loss of generality that \eqref{eq:y_non_singular} is satisfied. 

Now, returning to \eqref{eq:h}, and  using the notation from \eqref{eq:F}, we see that for some permutation $\tau\in S_n$ which fixes the first~$d-1$ coordinates, we have that
$$\creadout^{-1} \epsilon\geq \| h_{\mi}^{(1)}(\mX)-h_{\mi}^{(1)}(\mY)\|_\infty=\| \update\circ F_\textbf{i}(\mX)-\update\circ F_\textbf{i}(\tau \mY)\|_\infty \geq \cupdate \|F_\textbf{i}(\mX)-F_\textbf{i}(\tau \mY)\|_\infty  .$$
By also permuting the last~$n-d+1$ coordinates, we may assume without loss of generality that $\tau$ is the identity, so that the bound above give us
\begin{equation}\label{eq:Fbound}
   \|F_\mi(\mX)-F_\mi( \mY)\|_\infty \leq \cupdate^{-1}\creadout^{-1}\epsilon,
\end{equation}
where~$\mi = (1,\ldots,d-1)$. 
For clarity of notation, we henceforth drop the lower index $\mi$,  and simply write~$F$ when referring to~$F_\mi$.

We now want to use \eqref{eq:Fbound} to bound $\dgplus(\mX,\mY)$. For this goal, let $\mR(\mX)$ be the $SO(d)$ matrix whose first $d-1$ rows $v_1,\ldots,v_{d-1}$ are obtained from applying the Gram Schmidt procedure to $\mX$, namely 
\begin{align*}
u_1&=x_1\\
v_1&=\frac{u_1}{\|u_1\|_2}\\
u_2&=x_2-(x_2\cdot v_1)v_1\\
v_2&=\frac{u_2}{\|u_2\|_2}\\
&\vdots\\
u_{d-1}&=x_{d-1}-(x_{d-1}\cdot v_{d-2})v_{d-2}-\ldots-(x_{d-1}\cdot v_{1})v_{1}\\
v_{d-1}&=\frac{u_{d-1}}{\|u_{d-1}\|_2}\\
\end{align*}
and whose last row $v_d$ is defined via 
$$u_d=x_{\mi}^\perp, \quad v_d=\frac{u_d}{\|u_d\|_2} . $$
Then 
\begin{equation}\label{eq:bound1}
\dgplus(\mX,\mY)^2\leq \|\mR(\mX)\cdot \mX-\mR(\mY)\cdot \mY \|_F^2= \sum_{j=1}^d \sum_{k=1}^n |v_j(\mX)\cdot x_k-v_j(\mY)\cdot y_k|^2,    
\end{equation}
where~$v_j(\X)$ denotes the~$j$-th' row of~$X$, 
and so now our goal is to bound the right hand side of the inquality above from above by a multiple of $\epsilon$. To do this, we first note that by our assumptions up to now, both $\mX$ and $\mY$ reside in the compact set 
$$K=\{\mY\in \RR^{d\times n}| \frac{1}{2}\leq \|\mY\|_F\leq 1 \text{ and }  \|y_{\mi}^{\perp}\|_2 \geq   \frac{c}{2^{d}}  \} .$$
Our strategy will be to show that, for every $j=1,\ldots,d$ and $k=1,\ldots,n$, there exists a smooth function $f_{j,k}$ defined on an open set containing $F(K) $, such that 
\begin{equation}\label{eq:smooth} v_j(\mX)\cdot x_k=f_{j,k}(F(\mX)), \quad \forall \mX\in K ,\end{equation}
as a result, we would have that all the  $f_{j,k} $ are $L$ Lipschitz on the compact set $F(K)$,for an appropriate $L$, and thus that 
 we can bound  the expression on the right hand side of \eqref{eq:bound1}  by 
\begin{align*}
\sum_{j=1}^d \sum_{k=1}^n |v_j(\mX)\cdot x_k-v_j(\mY)\cdot y_k|^2&=\sum_{j=1}^d \sum_{k=1}^n  |f_{j,k}(F(\mX))-f_{j,k}(F(\mY))|^2 \\
&\leq \sum_{j=1}^d \sum_{k=1}^n L^2 \|F(\mX)-F(\mY)\|_\infty^2\leq L^2 \cupdate^{-2} \creadout^{-2} \epsilon^2,   \end{align*}
which concludes the proof that $H$ is lower Lipschitz on $K_c$. Thus, to conlcude step 1(b) of the proof it remains  to explain why \eqref{eq:smooth} holds for appropriate smooth function $f_{j,k}$.  

To do this, let us first show that for $j\in [d-1]$ and $k\in [n]$, the inner product $x_j \cdot x_k $ is a smooth function of $F(\mX)$. Indeed, when $k=i$, we get that $x_j \cdot x_j$ is obtained by squaring $\|x_j\|_2 $, which is one of the coordinates of $F(\mX)$. For $k\neq j$, the inner product $x_j \cdot x_k $ is the product of $\frac{x_j \cdot x_k}{\|\mX\|_F} $, which is a coordinate of $F(\mX)$, and the norm $\|\mX\|_F$ itself. We have that  is a smooth function of $\|x_\ell\|_2, \quad \ell=1,\ldots,n $, which  are all coordinates of $F(\mX)$, namely 
$$\|\mX\|_F=\sqrt{\sum_{\ell=1}^n \|x_\ell\|_2^2}=N\left( F(\mX) \right) $$
where $N$ denotes the 2-norm function, applied to the appropriate coordinates of $F$. The 2-norm function is smooth on $\RR^n \setminus \{0\} $, and the projection of $F(K)$ onto the appropriate $n$ coordinates  is contained in $\RR^n \setminus \{0\} $ since every $\mX$ in $K$ has a positive norm . Thus we see that the Frobenius norm, and the inner products $x_j \cdot x_k $, are indeed the composition of a  smooth function with $F(\mX)$, where the smooth function is defined in a neighbourhood of $F(K) $.

Now, for $j=1,\ldots,d-1$, we can show recursively that $v_j(\mX)$ can be written by a linear combination involving smooth functions $\alpha_{s,j}$ applied to the inner products of the first $d-1$ element of $\mX$, namely 
$$v_j(\mX)=\sum_{s \leq j} \alpha_{s,j}\left(\left(  x_a \cdot x_b \right)_{a,b=1}^{d-1} \right)\cdot x_s  .$$
If follows that $v_j(\mX)\cdot x_k $ is a smooth function of the inner products, which are themselves smooth functions of $F(\mX) $, so we see that \eqref{eq:smooth} holds for appropriate smooth $f_{j,k}$. 

Finally, we consider the case $j=d$. In this case 
$$v_d\cdot x_k=\frac{x_{\mi}^\perp \cdot x_k }{\|x_{\mi}^\perp\|_2} =\frac{\| \mX\|_F}{{\|x_{\mi}^\perp\|_2}}\cdot \frac{x_{\mi}^\perp \cdot x_k }{\|\mX\|_F}.$$
We note that $ \frac{x_{\mi}^\perp \cdot x_k }{\|\mX\|_F} $ is a coordinate of $F(\mX)$, while we have already saw that  $\| \mX\|_F $ is a smooth function of $F(\mX)$, and it is known that  $\|x_{\mi}^\perp\|_2$ is non-zero on our domain of interest, and 
$$\|x_{\mi}^\perp\|_2=\sqrt{\mathrm{det}\left(\left(  x_a \cdot x_b \right)_{a,b=1}^{d-1} \right)}, $$
and the inner products themselves are also smooth functions of $F(\mX)$. Thus we have also in the case $j=d$ that \eqref{eq:smooth} for an appropriate smooth function $f_{j,k}$. This concludes the proof of Step 1(b).

\textbf{Step 2:} We now consider the case where $\|\mX\|_F=1$ and $\|\mY\|_F\leq 1/2$. We will show that in this case both $\dgplus(\mX,\mY) $ and $\|H(\mX)-H(\mY)\|_\infty $ are uniformly bounded away from zero and infinity. First, for $\dgplus$, we have for all $\mX,\mY$ which are centralized, that
\begin{align*}
\dgplus(\mX,\mY)&\leq \|\mX-\mY\|_F\leq \|\mX\|_F+\|\mY\|_F\leq 3/2\\
\dgplus(\mX,\mY)&=\min_{\tau \in S_n, \mR \in SO(d)} \|\mX-(\tau,\mR)\mY\|_F\geq \|\mX\|_F-\|(\tau,\mR)\mY\|_F=\|\mX\|_F-\|\mY\|_F\geq 1/2,
\end{align*}
where $(\tau_*,R_*)$ above are the group elements minimizing the expression to their left. The upper boundedness of $H$ follows from the fact that all coordinates of $F(\mX)$ are in $[-1,1] $, and the functions $\update, \readout$ are Lipschitz, and in particular continuous. Therefore $\|H(\mX)\|_\infty\leq M $ for an appropriate $M$, and so 
$$\|H(\mX)-H(\mY)\|_\infty\leq 2M .$$
To upper bound $H$ from below, we have 
\begin{align*}
1-1/4\leq \|\mX\|_F^2-\|\mY\|_F^2&=\sum_{j=1}^n \|x_j\|_2^2-\sum_{j=1}^n \|y_j\|_2^2\\
&=\sum_{j=1}^n (\|x_j\|_2-\|y_j\|_2)(\|x_j\|_2+\|y_j\|_2)\\
&\leq\frac{3}{2}\sum_{j=1}^n (\|x_j\|_2-\|y_j\|_2)\\
&\leq  \frac{3n}{2}\cupdate^{-1} \creadout^{-1} \|H(\mX)-H(\mY)\|_\infty
\end{align*}
so we obtained 
$$\|H(\mX)-H(\mY)\|_\infty\geq \frac{3}{4}\cdot \frac{2}{3n}=\frac{1}{2n}.$$
We conclude that if $\|\mX\|_F=1$ and $ \| \mY\|_F \leq 1/2$, then 
\begin{align*}
\|H(\mX)-H(\mY)\|_\infty&\geq  \frac{1}{2n} \geq \frac{1}{2n}\cdot \frac{2}{3} \dgplus(\mX,\mY)\\
\|H(\mX)-H(\mY)\|_\infty&\leq 2M \leq  4M\dgplus(\mX,\mY)
\end{align*}

\textbf{Step 3:} To conclude the proof, let $\mX,\mY$ be two non zero point sets in $\Omega_c$. Assume without loss of generality that $\| \mX \|_F\geq \|\mY\|_F$ and set $t=\| \mX \|_F^{-1} $. Then $t\mX$ is of norm 1 and $t\mY$ is of norm $\leq 1$. If $t\mY$ has norm of more than $1/2$ then $t\mX,t\mY \in K_c$, and if not, then $\| t\mX \|_F=1, \| t\mY \|_F\leq 1/2 $. We have upper and lower Lipschitz bounds for both these cases, thus, for an appropriate $0<c\leq C$ we have that 
$$c\dgplus(t\mX,t\mY)\leq \|H(t\mX)- H(t\mY)\|_\infty\leq C\dgplus(t\mX,t\mY) .$$
Since both $H$ and $\dgplus$ are homogenous, we deduce that the same inequality holds even when we do not scale by $t$, namely
\begin{equation}\label{eq:desired}
c\dgplus(\mX,\mY)\leq \|H(\mX)- H(\mY)\|_\infty\leq C\dgplus(\mX,\mY) .
\end{equation}
Finally, it remains to address the case that $\mY=0_{d\times n}$. If also $\mX=0_{d\times n}$ then all expressions in \eqref{eq:desired} equal zero and the inequality holds. Otherwise,  $\mX$ is not zero, and then \eqref{eq:desired} holds when replacing $\mY$ with  $t\mX$, for all $t>0$. Taking the limit $t \rightarrow 0$ we get the inequality in \eqref{eq:desired} for the case $\mY=0_{d\times n}$, which concludes the proof.  
\end{proof}

\section{Experiments}

\begin{table}[h]
\centering
\begin{tabular}{ |c|c|c| }
\hline
\textbf{Noise Level} & \textbf{EGNN Accuracy} & \textbf{GramNet Accuracy} \\
\hline
0.0   & 1.0000 & \textbf{1.0000} \\
0.005 & 0.8917 & \textbf{0.9390} \\
0.01  & 0.8546 & \textbf{0.8871} \\
0.05  & 0.5978 &  \textbf{0.6428} \\
0.1   & 0.4032 & \textbf{0.4953} \\
\hline
\end{tabular}
\caption{Accuracy of EGNN and GramNet under increasing noise levels.}
\label{tab:egnn_vs_gramnet}
\end{table}

In this section, our goal is to give some indication to the potential of  our bi-Lipschitz geometric models. To this end, we evaluate our 
1-WL geometric Bi-Lipshcitz model on a task of matching between pairs of  point clouds with $d=2$. In this task, we are given a pair $\mX,\mY$ which are similar, possibly up to rotation and permutation. Namely, $\mX\approx \mR \mY \mP $ for some rotation matrix $\mR$ and permutation matrix $\mP$. The goal is to recover the parameters $(\mR,\mP) $ and thus the correct matching between $\mX$ and $\mY$ (we handle possibe translation ambiguity by pre-centralizing $\mX$ and $\mY)$. 

One of the popular approaches to this problem is finding the minimizing group elements from the definition of the PM metric in \eqref{eq:dg}. This is a non-convex problem which can be challenging to minimize directly, although closed form solutions are known when minimizing only over permutations, or only over rigid motions. Alternating between these two closed form solutions yields the well known ICP algorithm \cite{icp} which performs well when initialized properly, but generally can suffer from many local minima. 

Due to these challenges, it has been suggested to use machine learning to learn the correct transformations based on the given data (see e.g., \cite{Wang2019DeepCP} ). A common way to perform this is by constructing a parameteric model $f_\theta$ which is invariant to rotations, and equivariant to permutations, in the sense that $f_\theta(\mR \mY \mP)=f_\theta( \mY) \mP $. In this case, $f_\theta(\mX) $ and  $f_\theta(\mY) $ will be related only by a permutation, and one can solve for the permutation only using the Sinkhorn algorithm \cite{Cuturi2013SinkhornDL} or other approximations of the linear assignment problem. We use this approach here as well (Alternatively, one could also consider permutation invariant and rotation equivariant methods, and solve for the rotation component). Our goal is to compare our proposed bi-Lipschitz equivariant model in the $d=2$ setting,  with a standard equivariant model, EGNN \cite{egnn}. Our hypothesis is that since this task is strongly related to the PM metric, the bi-Lipschitzness which our method enjoys will lead to improved performance.

To conduct this experiment, we simulated training and test data, each consisting of~$N = 1,000$ pairs of point clouds~$(\mX_k,\mY_k)\subset \mR^2$, where~$\mY_k$ is a perturbed version of~$\mX_k$. Each such pair was created as follows.
First, we generated a 2D Gaussian mixture with three equally weighted factors, where the means of these factors were randomly sampled from a uniform distribution over~$\left[-6,6\right]^2$. The covariance matrix of each factor was constructed by generating a 2D diagonal matrix with diagonal elements sampled from a uniform distribution over~$[0,1]$, and then conjugating it with a random 2D orthogonal matrix. Then, we generated the point cloud~$\mX_k$ by sampling~$n = 90$ points from the mixture, and its perturbed version~$\mY_k$ by independently translating each point in~$\mX_k$ in a random direction.
Finally, the point cloud~$Y_k$ was rotated by a random angle~$\alpha_k$ and permuted by a random permutation $\mP_k$.
In perturbing the point clouds for each new pair~$(\mX_k,\mY_k)$, we increased the magnitude of the translations (i.e., the magnitude of the translation vector) linearly in~$i$, starting from~$0.01$ and gradually reaching~$0.1$. This method, known as ``curriculum learning''~\cite{bengioCurriculum}, was designed to enable the network to gradually learn the difficult task of matching permuted point clouds in a noisy setting.

Next, we constructed and trained an architecture to match the pairs in the test data, as follows. First, all of the point cloud pairs~$(\mX_k,\mY_k)$ were centered. Then, we fed the dataset into our Bi-Lipschitz generalized 1-WL network to generate a permutation-equivariant and rotationally-invariant embedding for each point cloud: in othe words, this network produces rotation invariant node features $h_i(\mX_k) $ and $h_i(\mY_k) $ for each node $i$. The resulting features for each pair were then fed into a differentiable Sinkhorn matching algorithm~\cite{Cuturi2013SinkhornDL}, which outputs a doubly-stochastic matrix~$\mQ_k$ that closely approximates the  permutation matrix that best matches the features $h_i(\mX_k) $ and $h_i(\mY_k) $. The training loss measures the quality of $\mQ_k$ as an approximation for the ground truth permutation $\mP_k $ which relates $\mX_k$ to $\mY_k$, using a cross entropy loss (applied to each matrix row separately).  To use a doubly-stochastic matrix~$\mQ$ outputted by our trained model for matching an out-of-sample pair, we simply compute the permutation~$\mP$ whose each~$i$-th row is a one-hot vector such that its~`$1$' entry is at the location of the entry with maximal magnitude in the~$i$-th row of~$\mQ$ (in general this procedure produces a matrix with a single $1$ in each row, but it may not be a permutation since the same column could have several ones).

The results of both our method and EGNN are shown in  Table~\ref{tab:egnn_vs_gramnet}, where we see that our architecture outperforms EGNN in  all magnitudes of perturbations. We feel these results indicate the potential of bi-Lipschitz analysis for improving equivariant models,  for matching tasks as well as other domains. We believe these empirical results could be improved by constructing equivariant models whose bi-Lipschitz distortion is low. This would require additional tools to those used here, where our focus was mostly on determining bi-Lipschitzness without addressing the constants, and is thus an interesting avenue for future work.  
\section{Acknowledgments}
 N.D.,E.R and Y.S. are funded by Israeli Science Foundation grant no.
272/23.

\bibliography{main}
\bibliographystyle{tmlr}
\appendix
\section{From $\Gplus$ bi-Lipschitz models to $\Gplus$ bi-Lipschitz models}\label{app:Gplus}
In this appendix, we explain how to transform a $\Gplus$ bi-Lipschitz model to a $\Gplus$ bi-Lipschitz models, via the following proposition
\begin{proposition}
Let $f:\RR^{d\times n} \to \RR^m $ be a $\Gplus$ invariant model, which is bi-Lipschitz with respect to $\dgplus$. Fix $\mR_0$ to be some rotation matrix with determinant of $-1$ and $\mR_0^2=I_d $, and let $\psi:\RR^{m\times 2}\to \RR^k $ be a $S_2$ invariant and bi-Lipschitz function. Then the function 
$$\tilde{f}(\mX)=\psi \left(f(\mX),f(\mR_0\mX) \right) $$
is $\G$ invariant, and bi-Lipschitz with respect to $\dg$. 
\end{proposition}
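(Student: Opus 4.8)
The plan is to deduce both the $\G$-invariance and the bi-Lipschitz property of $\tilde f$ from the corresponding properties of $f$, using the fact that $\mathcal{O}(d)$ is the disjoint union of $\SO(d)$ and $\mR_0\cdot\SO(d)$, so that the reflection $\mR_0$ together with proper rotations generates the full orthogonal group. First I would verify $\G$-invariance. Given $g\in\G$ we may write its rotational part $\mR\in\mathcal{O}(d)$ either as a proper rotation or as $\mR_0$ times a proper rotation; in the first case $g\mX$ is $\Gplus$-equivalent to $\mX$ and $\mR_0 g\mX$ is $\Gplus$-equivalent to $\mR_0\mX$, while in the second case the two arguments swap (up to $\Gplus$-equivalence, using $\mR_0^2=I_d$ and that permutations/translations commute appropriately with $\mR_0$). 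Hence the pair $\bigl(f(\mX),f(\mR_0\mX)\bigr)$ is, as a multiset of size $2$, invariant under $\G$; since $\psi$ is $S_2$-invariant, $\tilde f$ is $\G$-invariant.

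For the bi-Lipschitz bounds, the key identity to establish is the ``pairing'' formula relating $\dg$ to $\dgplus$:
\begin{equation*}
\dg(\mX,\mY)=\min\bigl\{\dgplus(\mX,\mY),\ \dgplus(\mX,\mR_0\mY)\bigr\}.
\end{equation*}
This holds because minimizing $\|\mX-g\mY\|_F$ over $g\in\G$ splits into the two cosets of $\SO(d)$ in $\mathcal{O}(d)$: the coset $\SO(d)$ contributes $\dgplus(\mX,\mY)$, and the coset $\mR_0\SO(d)$ contributes $\min_{g'\in\Gplus}\|\mX-\mR_0 g'\mY\|_F=\dgplus(\mX,\mR_0\mY)$ (using again $\mR_0^2=I_d$ so $\mR_0$ acts as an isometric involution on the quotient). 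Now upper-Lipschitzness of $\tilde f$ follows easily: by bi-Lipschitzness of $\psi$ and $f$,
\begin{equation*}
\|\tilde f(\mX)-\tilde f(\mY)\|\le C_\psi\bigl(\|f(\mX)-f(\mY)\|+\|f(\mR_0\mX)-f(\mR_0\mY)\|\bigr)\le C_\psi C_f\bigl(\dgplus(\mX,\mY)+\dgplus(\mR_0\mX,\mR_0\mY)\bigr),
\end{equation*}
and since $\mR_0$ is an isometry $\dgplus(\mR_0\mX,\mR_0\mY)=\dgplus(\mX,\mY)$, which is at most $2C_\psi C_f\,\dgplus(\mX,\mY)$; but a short argument (or directly the identity above) bounds $\dgplus(\mX,\mY)$ by something comparable to $\dg(\mX,\mY)$ — actually here one must be a little careful, since $\dgplus\ge\dg$ always but not conversely, so the clean route is to work with the multiset/Wasserstein-type distance on pairs and invoke the pairing identity directly rather than bounding $\dgplus$ by $\dg$.

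The genuinely delicate part — and what I expect to be the main obstacle — is the \textbf{lower} Lipschitz bound. Here one wants $\|\tilde f(\mX)-\tilde f(\mY)\|\ge c\,\dg(\mX,\mY)$. Since $\psi$ is bi-Lipschitz and $S_2$-invariant, $\|\tilde f(\mX)-\tilde f(\mY)\|$ is comparable to the $S_2$-quotient distance between the pairs $\bigl(f(\mX),f(\mR_0\mX)\bigr)$ and $\bigl(f(\mY),f(\mR_0\mY)\bigr)$, i.e.\ to
\begin{equation*}
\min\Bigl\{\max\{\|f(\mX)-f(\mY)\|,\|f(\mR_0\mX)-f(\mR_0\mY)\|\},\ \max\{\|f(\mX)-f(\mR_0\mY)\|,\|f(\mR_0\mX)-f(\mY)\|\}\Bigr\}.
\end{equation*}
Using lower-Lipschitzness of $f$ for $\dgplus$, each term $\|f(\mathbf U)-f(\mathbf V)\|$ is bounded below by $c_f\,\dgplus(\mathbf U,\mathbf V)$, so the displayed quantity is bounded below by $c_f$ times $\min\{\dgplus(\mX,\mY),\dgplus(\mX,\mR_0\mY)\}$ (the other entries of each max only help), which by the pairing identity is exactly $c_f\,\dg(\mX,\mY)$. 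Assembling the constants gives $\|\tilde f(\mX)-\tilde f(\mY)\|\ge c_\psi c_f\,\dg(\mX,\mY)$, completing the argument. The one subtlety to handle carefully is that $\psi$ being ``$S_2$-invariant and bi-Lipschitz'' must be interpreted as bi-Lipschitz with respect to the $S_2$-quotient (Wasserstein-type) distance on $\RR^{m\times 2}$ — exactly as the readout functions were treated in the main text — so that the min-over-two-matchings structure above is legitimate; once that is pinned down the rest is bookkeeping with the pairing identity.
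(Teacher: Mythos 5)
Your proposal is correct and follows essentially the same route as the paper: decompose $\mathcal{O}(d)$ into the two $\SO(d)$-cosets to obtain $\G$-invariance via the multiset $\lms f(\mX), f(\mR_0\mX)\rms$, and then deduce bi-Lipschitzness by interpreting the $S_2$-invariance/bi-Lipschitzness of $\psi$ as bi-Lipschitzness with respect to the two-point Wasserstein distance and invoking the pairing identity $\dg(\mX,\mY)=\min\{\dgplus(\mX,\mY),\dgplus(\mX,\mR_0\mY)\}$ together with $\dgplus(\mR_0\mX,\mR_0\mY)=\dgplus(\mX,\mY)$. The paper leaves the pairing identity implicit and uses a sum rather than a max inside the Wasserstein matching, but these are cosmetic; your detour at the start of the upper-Lipschitz paragraph (where you briefly try to bound $\dgplus$ by $\dg$ before correctly switching to the min-over-matchings) is a wrong turn that you yourself identify and repair, and the final argument is the same as the paper's.
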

\begin{proof}
To see that $\tilde f$ is $\G$ invariant, first note that the permutation and translation invariance follows from the permutation and translation invariance of $f$. For invariance to orthogonal transformation, let $\mR$ be some orthogonal matrix. If $\mR$ is also in $SO(d)$, then  due to the $SO(d)$ invariance of $f$,
$$\tilde{f}(\mR \mX)=\left(f(\mR \mX),f(\mR_0\mR \mX)\right)=\left(f( \mX),f( [\mR_0\mR \mR_0] \mR_0\mX)\right)=\tilde{f}(\mR \mX) $$
 If $\mR$ is not in $SO(d)$, then $\mR_0\mR$ and $\mR\mR_0$ are in $SO(d)$, and so 
$$\tilde f (\mR\mX)=\psi\left(f(\mR\mR_0)\mR_0\mX),f(\mR_0\mR\mX)\right)=\psi \left(f(\mR_0\mX),f(\mX)\right)=\tilde f(\mX) ,$$
where for the last equality we used the permutation invariance of $\psi$. 

Now, to prove bi-Lipschitzness. By assumption, for 
appropriate positive $c_f,C_f$, we have that for all $\mX,\mY\in \RR^{d\times n}$, 
$$c_f\dgplus(\mX,\mY)\leq \|f(\mX)-f(\mY)\|_2 \leq C_f\dgplus(\mX,\mY). $$
We want to use this to prove bi-Lipschitzness of $\tilde f$ with respect to $\dg$. For lower Lipschitzness, we have for all $\mX,\mY\in \RR^{d\times n}$
\begin{align*}
\|\tilde f(\mX)-\tilde f(\mY)\|_2&=\|\psi \left(f(\mX),f(\mR_0\mX)\right)- \psi \left(f(\mY),f(\mR_0\mY)\right)\|_2\\
&\geq c_\psi \min\{\|f(\mX)-f(\mY)\|_2+\|f(\mR_0\mX)-f(\mR_0\mY)\|_2\\
& \quad \quad  ,\|f(\mX)-f(\mR_0\mY)\|_2+\|f(\mR_0\mX)-f(\mY)\|_2  \}\\
&\geq c_\psi c_f \min \{2\dgplus(\mX,\mY), 2\dgplus(\mX,\mR_0\mY)  \}\\
&=2 c_\psi c_f  \dg(\mX,\mY) .
\end{align*}
To show upper Lipschitzness, we use a similar procedure to obtain
\begin{align*}
\|\tilde f(\mX)-\tilde f(\mY)\|_2&=\|\psi \left(f(\mX),f(\mR_0\mX)\right)- \psi \left(f(\mY),f(\mR_0\mY)\right)\|_2\\
&\leq C_\psi \min\{\|f(\mX)-f(\mY)\|_2+\|f(\mR_0\mX)-f(\mR_0\mY)\|_2\\
& \quad \quad  ,\|f(\mX)-f(\mR_0\mY)\|_2+\|f(\mR_0\mX)-f(\mY)\|_2  \}\\
&\leq C_\psi C_f \min \{2\dgplus(\mX,\mY), 2\dgplus(\mX,\mR_0\mY)  \}\\
&=  C_\psi C_f  2 \dg(\mX,\mY) .
\end{align*}
\end{proof}

\end{document}